\def\eqref#1{equation~\ref{#1}}
\def\1{\bm{1}}
\newtheorem{theorem}{Theorem}
\newtheorem{remark}{Remark}
\title{ResNets Ensemble via the Feynman-Kac Formalism to Improve Natural and Robust Accuracies}
\author{
Bao Wang \\
  Department of Mathematics\\
  University of California, Los Angeles\\
  \texttt{wangbaonj@gmail.com}\\
  \and
  Bingjie Yuan \\Computer Science Department\\
  Tsinghua University\\
   \texttt{ybj14@mails.tsinghua.edu.cn} \\
  \and
  Zuoqiang Shi\\
  Yau Mathematical Sciences Center \\
  Tsinghua University\\
  \texttt{zqshi@tsinghua.edu.cn} \\
  \and 
  Stanley J. Osher\\
  Department of Mathematics\\
  University of California, Los Angeles\\
  \texttt{sjo@math.ucla.edu}\\
}
\begin{document}

\maketitle

\begin{abstract}
Empirical adversarial risk minimization (EARM) is a widely used mathematical framework to robustly train deep neural nets (DNNs) that are resistant to adversarial attacks. However, both natural and robust accuracies, in classifying clean and adversarial images, respectively, of the trained robust models are far from satisfactory. In this work, we unify the theory of optimal control of transport equations with the practice of training and testing of ResNets. Based on this unified viewpoint, we propose a simple yet effective ResNets ensemble algorithm to boost the accuracy of the robustly trained model on both clean and adversarial images. The proposed algorithm consists of two components: First, we modify the base ResNets by injecting a variance specified Gaussian noise to the output of each residual mapping. Second, we average over the production of multiple jointly trained modified ResNets to get the final prediction. These two steps give an approximation to the Feynman-Kac formula for representing the solution of a transport equation with viscosity, or a convection-diffusion equation. For the CIFAR10 benchmark, this simple algorithm leads to a robust model with a natural accuracy of {\bf 85.62}\% on clean images and a robust accuracy of ${\bf 57.94 \%}$ under the 20 iterations of the IFGSM attack, which outperforms the current state-of-the-art in defending against IFGSM attack on the CIFAR10. Both natural and robust accuracies of the proposed ResNets ensemble can be improved dynamically as the building block ResNet advances. The code is available at: \url{https://github.com/BaoWangMath/EnResNet}.
\end{abstract}

\section{Introduction}
Deep learning (DL) achieves great success in image and speech perception \cite{DeepLearningReview:2015}. Residual learning revolutionizes the deep neural nets (DNNs) architecture design and makes training of the ultra-deep, up to more than one thousand layers, DNNs practical \cite{ResNet}. The idea of residual learning motivates the development of a good number of related powerful DNNs, e.g., Pre-activated ResNet \cite{PreActResNet}, ResNeXt \cite{ResNext}, DenseNet \cite{DenseNet}, and many others. Neural nets ensemble is a learning paradigm where many DNNs are jointly used to improve the performance of individual DNNs \cite{Hansen:1990}.
\medskip

Despite the extraordinary success of DNNs in image and speech recognition, their vulnerability to adversarial attacks raises concerns when applying them to security-critical tasks, e.g., autonomous cars \cite{Akhtar:2018,Attack:Tesla}, robotics \cite{Giusti:2016Drones}, and DNN-based malware detection systems \cite{PapernotSecurity:2016,PapernotMalware:2016}.
Since the seminal work of Szegedy et al. \cite{szegedy2013intriguing}, recent research shows that DNNs are vulnerable to many kinds of adversarial attacks including physical, poisoning, and inference attacks \cite{Chen:2017,CWAttack:2016,PapernotAttack:2016,Goodfellow:2014AdversarialTraining,Ilyas:2018,Athalye:2018B,Athalye:2018}. The physical attacks occur during the data acquisition, the poisoning and inference attacks happen during the training and testing phases of machine learning (ML), respectively.
\medskip

The adversarial attacks have been successful in both white-box and black-box scenarios. In white-box attacks, the adversarial attacks have access to the architecture and weights of the DNNs. In black-box attacks, the attacks have no access to the details of the underlying model. Black-box attacks are successful because one can perturb an image to cause its misclassification on one DNN, and the same perturbed image also has a significant chance to be misclassified by another DNN; this is known as transferability of adversarial examples  \cite{DBLP:journals/corr/PapernotMG16}. Due to this transferability, it is straightforward to attack DNNs in a black-box fashion \cite{LiuYanpei:2016,Brendel:2017}. There exist universal perturbations that can imperceptibly perturb any image and cause misclassification for any given network \cite{Moosavi-Dezfooli_2017_CVPR}. Dou et al. \cite{Dou:2018}, analyzed the efficiency of many adversarial attacks for a large variety of DNNs. Recently, there has been much work on defending against these universal perturbations \cite{Akhtar_2018_CVPR}.
\medskip

The empirical adversarial risk minimization (EARM) is one of the most successful mathematical frameworks for certified adversarial defense. Under the EARM framework, adversarial defense for
$\ell_\infty$ norm based inference attacks can be formulated as solving the following EARM \cite{Madry:2018,Yin:2019-Rademacher}
\begin{equation}
\label{Adversarial-ERM}
\min_{f\in \mathcal{H}} \frac{1}{n} \sum_{i=1}^n \max_{\|\mathbf{x}'_i-\mathbf{x}_i\|_\infty \leq \epsilon} L(f(\mathbf{x}_i', \mathbf{w}), y_i),
\end{equation}
where $f(\cdot, \mathbf{w})$ is a function in the hypothesis class $\mathcal{H}$, e.g., ResNets, parameterized by $\mathbf{w}$. Here, $\{(\mathbf{x}_i, y_i)\}_{i=1}^n$ are $n$ i.i.d. data-label pairs drawn from some high dimensional unknown distribution $\mathcal{D}$, $L(f(\mathbf{x}_i, \mathbf{w}), y_i)$ is the loss associated with $f$ on the data-label pair $(\mathbf{x}_i, y_i)$. For classification, $L$ is typically selected to be the cross-entropy loss; for regression, the root mean square error is commonly used. The adversarial defense for other measure based attacks can be formulated similarly. As a comparison, empirical risk minimization (ERM) is used to train models in a natural fashion that generalize well on the clean data, where ERM is to solve the following optimization problem
\begin{equation}
\label{ERM}
\min_{f\in \mathcal{H}} \frac{1}{n} \sum_{i=1}^n L(f(\mathbf{x}_i, \mathbf{w}), y_i).
\end{equation}
\medskip

Many of the existing works try to defend against the inference attacks by finding a good approximation to the loss function in EARM. Project gradient descent (PGD) adversarial training is a representative work along this side that approximate EARM by replacing $\mathbf{x}_i'$ with the adversarial data that obtained by applying the PGD attack to the clean data \cite{Goodfellow:2014AdversarialTraining,Madry:2018,Na:2018}. Zhang et al.~\cite{Zhang:2019-Trades} replace the empirical adversarial risk by a linear combination of empirical and empirical adversarial risks. Besides finding a good surrogate to approximate the empirical adversarial risk, under the EARM framework, we can also improve the hypothesis class to improve the adversarial robustness of the trained robust models.
\medskip

\subsection{Our Contribution}
The robustly trained DNNs usually more resistant to adversarial attacks, however, they are much less accurate on clean images than the naturally trained models. A natural question is

\emph{Can we improve both natural and robust accuracies of the robustly trained DNNs?}
\medskip

In this work, we unify the training and testing of ResNets with the theory of transport equations (TEs). This unified viewpoint enables us to interpret the adversarial vulnerability of ResNets as the irregularity, which will be defined later, of the TE's solution. Based on this observation, we propose a new ResNets ensemble algorithm based on the Feynman-Kac formula. In a nutshell, the proposed algorithm consists of two essential components. First, for each $l = 1, 2, \cdots, M$ with $M$ being the number of residual mappings in the ResNet, we modify the $l$-th residual mapping from $\mathbf{x}_{l+1} = \mathbf{x}_l+\mathcal{F}(\mathbf{x}_l)$ (Fig.~\ref{fig:Structure} (a)) to $\mathbf{x}_{l+1} = \mathbf{x}_l+\mathcal{F}(\mathbf{x}_l) + N(0, \sigma^2\mathbf{I})$ (Fig.~\ref{fig:Structure} (b)), where $\mathbf{x}_l$ is the input, $\mathcal{F}$ is the residual mapping and $N(0, \sigma^2\mathbf{I})$ is Gaussian noise with a specially designed variance $\sigma^2$. Second, we average over multiple jointly and robustly trained
modified ResNets' outputs to get the final prediction (Fig.~\ref{fig:NeuralNets-EnResNet}). This ensemble algorithm improves the base model's accuracy on both clean and adversarial data.
The advantages of the proposed algorithm are summarized as follows:
\begin{itemize}
    \item It outperforms the current state-of-the-art in defending against inference attacks.
    
    \item It improves the natural accuracy of the adversarially trained models.
    
    \item Its defense capability can be improved dynamically as the base ResNet advances.
    
    \item It enables to train and integrate an ultra-large DNN for adversarial defense with a limited GPU memory.
    
    
    \item It is motivated from partial differential equation (PDE) theory, which introduces a new way to defend against adversarial attacks, and it is a complement to many other existing adversarial defenses.
\end{itemize}

\begin{figure}[h]
\centering
\begin{tabular}{cc}
\includegraphics[width=0.38\columnwidth]{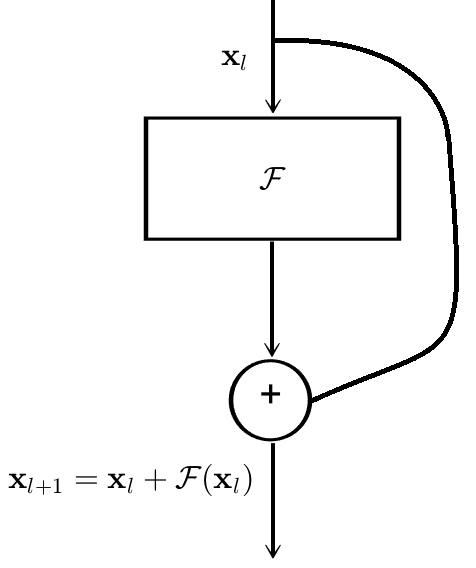}&
\includegraphics[width=0.50\columnwidth]{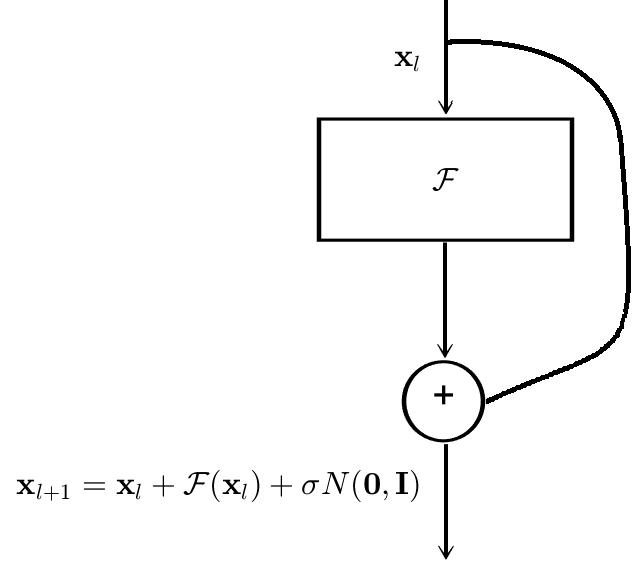}\\
(a) & (b)\\
\end{tabular}
\caption{(a) Residual mapping of the ResNet. (b) Gaussian noise injected residual mapping with $\sigma$ being the variance.}
\label{fig:Structure}
\end{figure}

\begin{figure}[h]
\centering
\begin{tabular}{c}
\includegraphics[width=0.75\columnwidth]{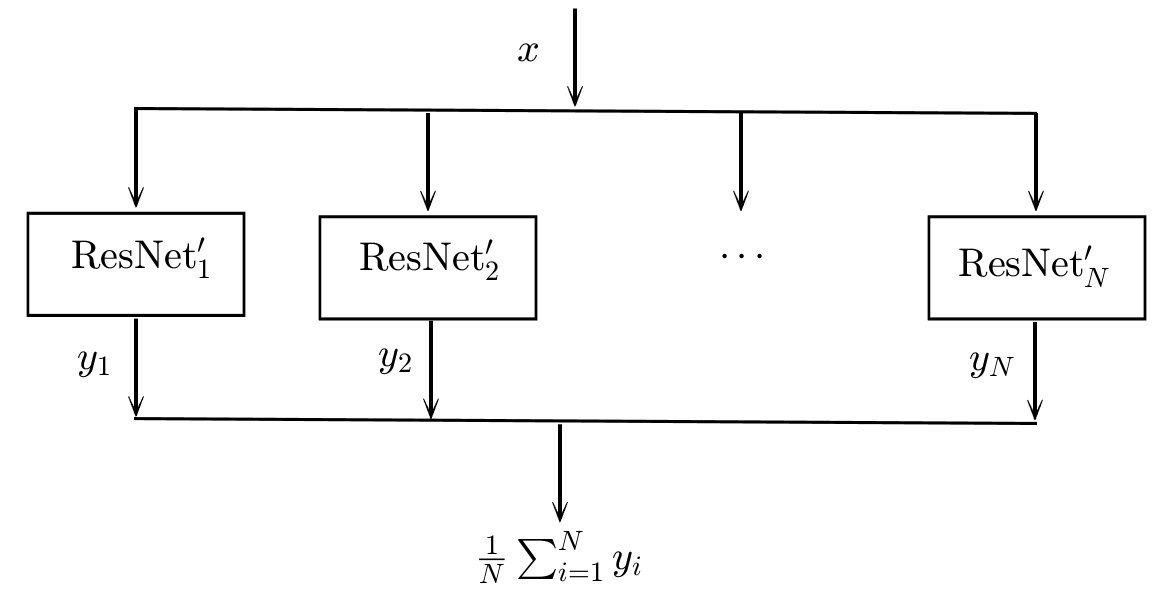}\\
\end{tabular}
\caption{Architecture of the EnResNet.}
\label{fig:NeuralNets-EnResNet}
\end{figure}

\subsection{Related Work}\label{Subsection-RelatedWork}
There is a massive volume of research over the last several years on defending against adversarial attacks for DNNs. Randomized smoothing transforms an arbitrary classifier $f$ into a "smoothed" surrogate classifier $g$ and is certifiably robust in $\ell_2$ norm based adversarial attacks \cite{Li:2018Smooth,Lecuyer:2019,CRK19,Wong:2018ICML,Cao:2017}. Among the randomized smoothing, one of the most popular ideas is to inject Gaussian noise to the input image and the classification result is based on the probability of the noisy image in the decision region. Our adversarial defense algorithm injects noise into each residual mapping instead of the input image, which is different from randomized smoothing.
\medskip 

Robust optimization for solving EARM achieves great success in defending against inference attacks \cite{Madry:2018,Raghunathan:2018a,Raghunathan:2018b,Wong:2018b,Salman:2019}. Regularization in EARM can further boost the robustness of the adversarially trained models \cite{Yin:2019-Rademacher,Kurakin:2017,Ross:2017,Zheng:2017}. The adversarial defense algorithms should learn a classifier with high test accuracy on both clean and adversarial data. To achieve this goal, Zhang et al. \cite{Zhang:2019-Trades} developed a new loss function, TRADES, that explicitly trades off between natural and robust generalization. To the best our of knowledge, TRADES is the current state-of-the-art in defending against inference attacks on the CIFAR10. Throughout this paper, we regard TRADES as the benchmark.
\medskip

Modeling DNNs as ordinary differential equations (ODEs) has drawn lots of attention recently. Chen et al. proposed neural ODEs for DL \cite{NODE}. E \cite{E:2017} modeled training ResNets as solving an ODE optimal control problem. Haber and Ruthotto \cite{Haber:2017} constructed stable DNN architectures based on the properties of numerical ODEs. Lu, Zhu and et al. \cite{BinDong:2018,Zhu:2018} constructed novel architectures for DNNs, which were motivated from the numerical discretization schemes for ODEs. Sun et al. \cite{SunDu:2018} modeled training of ResNets as solving a stochastic differential equation.
\medskip 

Model averaging with multiple stochastically trained identical DNNs is the most straightforward ensemble technique to improve the predictive power of base DNNs. This simple averaging method has been a success in image classification for ILSVRC competitions. Different groups of researchers use model averaging for different base DNNs and won different ILSVRC competitions \cite{AlexNet:2012,VGG:2014,ResNet}.
This widely used unweighted averaging ensemble, however, is not data-adaptive and is sensitive to the presence of excessively biased base learners. Ju et al., recently investigated ensemble of DNNs by many different ensemble methods, including unweighted averaging, majority voting, the Bayes Optimal Classifier, and the (discrete) Super Learner, for image recognition tasks. They concluded that the Super Learner achieves the best performance among all the studied ensemble algorithms \cite{Ju:2017}.
\medskip

Our work distinguishes from the existing work on DNN ensemble and feature and input smoothing from two major points: First, we inject Gaussian noise to each residual mapping in the ResNet. Second, we jointly train each component of the ensemble instead of using a sequential training.

\subsection{Organization}
We organize this paper in the following way: In section~\ref{Section-Motivation}, we model the ResNet as a TE and give an explanation for ResNet's adversarial vulnerability. In section~\ref{Section-Algorithms}, we present a new ResNet ensemble algorithm that motivated from the Feynman-Kac formula for adversarial defense. In section~\ref{Section-Results}, we present the natural accuracy of the EnResNets and their robust accuracy under both white-box and blind PGD and C\&W attacks, and compare with the current state-of-the-art. In section~\ref{Section-Generalization}, we generalize the algorithm to ensemble of different neural nets and numerically verify its efficacy. Our paper ends up with some concluding remarks.

\section{Theoretical Motivation and Guarantees}\label{Section-Motivation}
\subsection{Transport Equation Modeling of ResNets}
The connection between training ResNet and solving optimal control problems of the TE is investigated in \cite{Wang:2018AdversarialDefense,BaoWang:2018NIPS,Li:2017PDE}. In this section, we derive the TE model for ResNet and explain its adversarial vulnerability from a PDE viewpoint. The TE model enables us to understand the data flow of the entire training and testing data in both forward and backward propagation in training and testing of ResNets; whereas, the ODE models focus on the dynamics of individual data points \cite{NODE}.
\medskip

As shown in Fig.~\ref{fig:Structure} (a), residual mapping adds a  skip connection to connect the input and output of the original mapping ($\mathcal{F}$), and the $l$-th residual mapping can be written as
$$\mathbf{x}_{l+1} = \mathcal{F}(\mathbf{x}_l, \mathbf{w}_l) + \mathbf{x}_l,$$
with $\mathbf{x}_0=\hat{\mathbf{x}} \in T\subset \mathbb{R}^d$ being a data point in the set $T$, $\mathbf{x}_l$ and $\mathbf{x}_{l+1}$ are the input and output tensors of the residual mapping. The parameters $\mathbf{w}_l$ can be learned by back-propagating the training error.
For $\forall\ \hat{\mathbf{x}} \in T$ with label $y$, the forward propagation of ResNet can be written as
\begin{eqnarray}
\label{PDE-Eq1}
\begin{cases}
\mathbf{x}_{l+1} = \mathbf{x}_l + \mathcal{F}(\mathbf{x}_l, \mathbf{w}_l), \ \ l = 0, 1, \dots, L-1, \ \ \mbox{with}\ \ \ \mathbf{x}_0 = \hat{\mathbf{x}},\\
\hat{y} \doteq f(\mathbf{x}_L),
\end{cases}
\end{eqnarray}
where $\hat{y}$ is the predicted label, $L$ is the number of layers, and
$f(\mathbf{x}) = {\rm softmax}(\mathbf{w}_0\cdot \mathbf{x})$ be the output activation
with $\mathbf{w}_0$ being the trainable parameters. For the widely used residual mapping in the pre-activated ResNet \cite{PreActResNet}, as shown in Fig.~\ref{Res-Block-detail} (a), we have
\begin{eqnarray}
\label{velocity}
\mathcal{F}(\mathbf{x}_l, \mathbf{w}_l) = \mathbf{w}^{\rm C2}_l\otimes
\sigma (\mathbf{w}^{\rm B2}_l \odot \mathbf{w}^{\rm C1}_l \otimes \sigma(\mathbf{w}^{\rm B1}_l \odot x_l)), 
\end{eqnarray}
where $\mathbf{w}_l^{\rm C1} (\mathbf{w}_l^{\rm B1})$ and $\mathbf{w}_l^{\rm C2} (\mathbf{w}_l^{\rm B2})$ are the first and second convolutional (batch normalization) layers of the $l$-th residual mapping, respectively, from top to bottom order. $\otimes$ and $\odot$ are the convolutional and batch normalization operators, respectively.
\medskip

\begin{figure}
\centering
\begin{tabular}{cc}
\includegraphics[width=0.11\columnwidth]{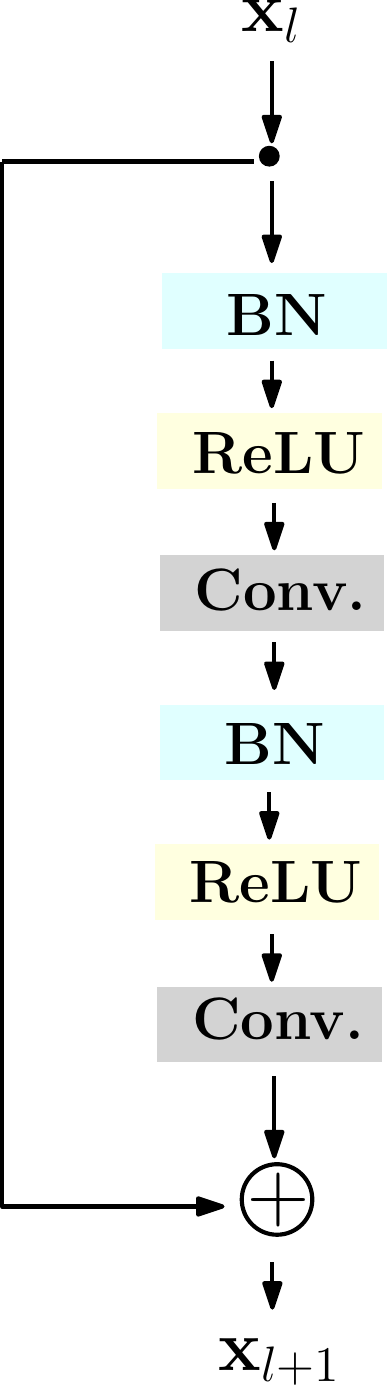}&
\includegraphics[width=0.65\columnwidth]{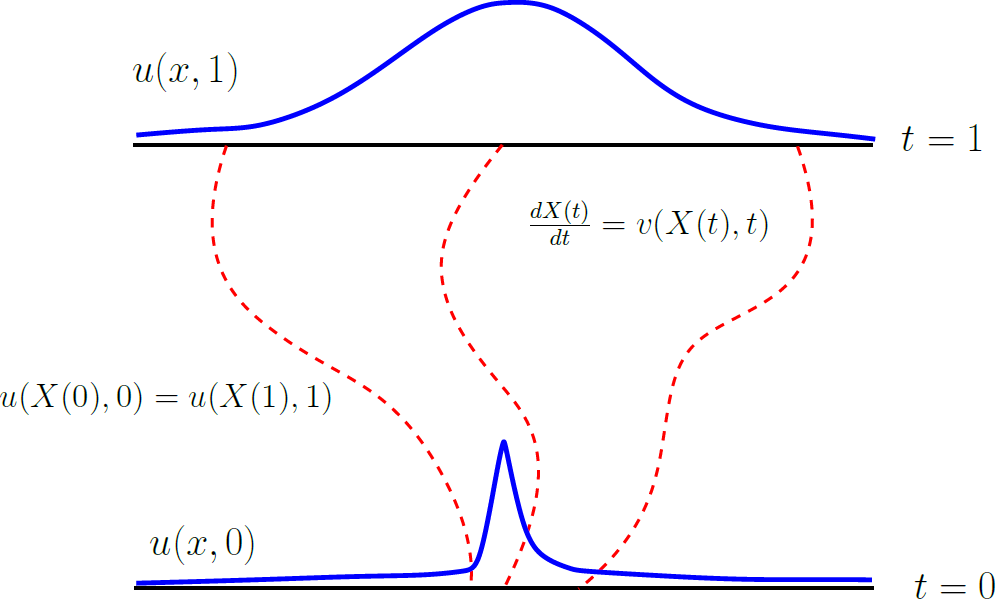}\\
(a) & (b) \\
\end{tabular}
\caption{(a) A detailed structure of the residual mapping in the pre-activated ResNet. (b) Demonstration of characteristic curves of the transport equation.}
\label{Res-Block-detail}
\end{figure}

Next, we introduce a temporal partition: 
let $t_l=l/L$, for $l=0, 1, \cdots, L$, with the time interval $\Delta t = 1/L$. Without considering dimensional consistency, we regard $\mathbf{x}_l$ in Eq.~(\ref{PDE-Eq1}) as the value of $\mathbf{x}(t)$ at the time slot $t_l$, 
so Eq.~(\ref{PDE-Eq1}) can be rewritten as
\begin{eqnarray}
\label{PDE-Eq2}
\begin{cases}
\mathbf{x}(t_{l+1}) = \mathbf{x}(t_l) + \Delta t\cdot\overline{F}(\mathbf{x}(t_l), \mathbf{w}(t_l)),  \ \ \ l = 0, 1, \dots, L-1,\ \ \mbox{with}\ \ \mathbf{x}(0) = \hat{\mathbf{x}} \\
\hat{y} \doteq f(\mathbf{x}(1)),
\end{cases}
\end{eqnarray}
where $\overline{F} \doteq \frac{1}{\Delta t}\mathcal{F}$. Eq.~(\ref{PDE-Eq2}) is the forward Euler discretization of the following ODE
\begin{equation}
\label{eq:character}
\frac{d\mathbf{x}(t)}{dt} = \overline{F}(\mathbf{x}(t), \mathbf{w}(t)), \ \ \mathbf{x}(0) = \hat{\mathbf{x}}.
\end{equation}
Let $u(\mathbf{x}, t)$ be a function that is constant along the trajectory defined by Eq.~(\ref{eq:character}), as demonstrated in Fig.~\ref{Res-Block-detail} (b), then $u(\mathbf{x}, t)$ satisfies the following TE
\begin{equation}
\label{PDE-Eq4}
\frac{d}{dt}\left(u(\mathbf{x}(t), t)\right) = \frac{\partial u}{\partial t}(\mathbf{x}, t) + \overline{F}(\mathbf{x}, \mathbf{w}(t))\cdot \nabla u (\mathbf{x}, t) = 0,\ \ \mathbf{x}\in \mathbb{R}^d,
\end{equation}
the first equality is because of the chain rule and the second equality dues to the fact that $u$ is constant along the curve defined by Eq.~(\ref{eq:character}).
\medskip

If we enforce the terminal condition at $t=1$ for Eq.~(\ref{PDE-Eq4}) to be
$$
u(\mathbf{x}, 1) = {\rm softmax} (\mathbf{w}_0\cdot \mathbf{x}) := f(\mathbf{x}),
$$
then according to the fact that $u(\mathbf{x}, t)$ is constant along the curve defined by Eq.~(\ref{eq:character}) (which is called the characteristic curve for the TE defined in Eq.~(\ref{PDE-Eq4})), we have $u(\hat{\mathbf{x}}, 0) = u(\mathbf{x}(1), 1) = f(\mathbf{x}(1))$; therefore, the forward propagation of ResNet for $\hat{\mathbf{x}}$ can be modeled as computing $u(\hat{\mathbf{x}}, 0)$ along the characteristic curve of the following TE
\begin{eqnarray}
\label{PDE-Eq6}
\begin{cases}
\frac{\partial u}{\partial t}(\mathbf{x}, t) + \overline{F}(\mathbf{x}, \mathbf{w}(t))\cdot \nabla u(\mathbf{x}, t) = 0,\ \ \mathbf{x}\in \mathbb{R}^d,\\
 u(\mathbf{x}, 1) = f(\mathbf{x}).
\end{cases}
\end{eqnarray}
\medskip

Meanwhile, the backpropagation in training ResNets can be modeled as finding the velocity field, $\overline{F}(\mathbf{x}(t), \mathbf{w}(t))$, for the following control problem
\begin{eqnarray}
\label{PDE-Eq7}
\begin{cases}
\frac{\partial u}{\partial t} (\mathbf{x}, t)+ \overline{F}(\mathbf{x}, \mathbf{w}(t))\cdot \nabla u (\mathbf{x}, t)= 0,\ \ \mathbf{x}\in \mathbb{R}^d,\\
u(\mathbf{x}, 1) = f(\mathbf{x}),\ \ \mathbf{x} \in \mathbb{R}^d,\\
u(\mathbf{x}_i, 0) = y_i, \ \ \mathbf{x}_i\in T,\ \ \mbox{with}\ T,
\end{cases}
\end{eqnarray}
where $T$ is the training set that enforces the initial condition on the training data for the TE.
Note that in the above TE formulation of ResNet, $u(\mathbf{x}, 0)$ serves as the classifier 
and the velocity field $\overline{F}(\mathbf{x}, \mathbf{w}(t))$ encodes ResNet's architecture and weights. When $\overline{F}$ is very complex, $u(\mathbf{x}, 0)$ might be highly irregular i.e. a small change in the input $\mathbf{x}$ can lead to a massive change in the value of $u(\mathbf{x}, 0)$. This irregular function may have a good generalizability on clean images, but it is not robust to adversarial attacks. Fig.~\ref{fig-Solution-Advection-Diffusion} (a) shows a 2D illustration of $u(\mathbf{x}, 0)$ with the terminal condition $u(\mathbf{x}, 1)$ shown in Fig.~\ref{fig-Solution-Advection-Diffusion} (d);
we will discuss this in detail later in this section.
\begin{figure}[!h]
\centering
\begin{tabular}{cc}
\includegraphics[width=0.42\columnwidth]{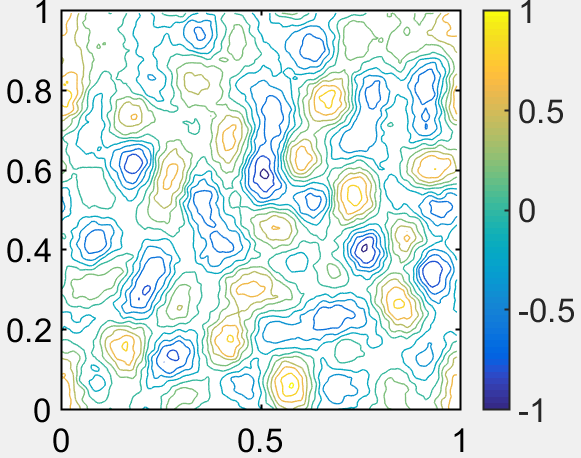}&
\includegraphics[width=0.42\columnwidth]{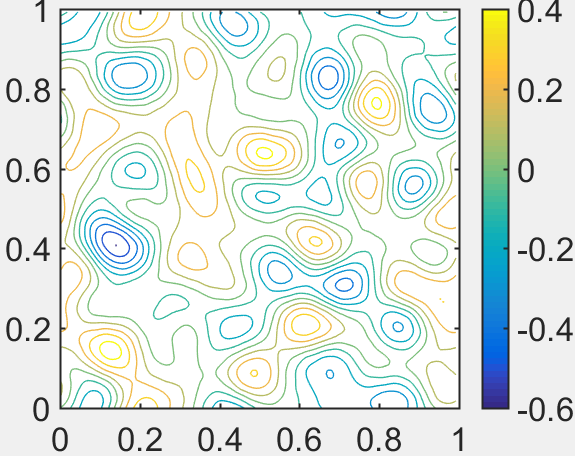}\\
(a) $\sigma=0$ & (b) $\sigma=0.01$ \\
\includegraphics[width=0.42\columnwidth]{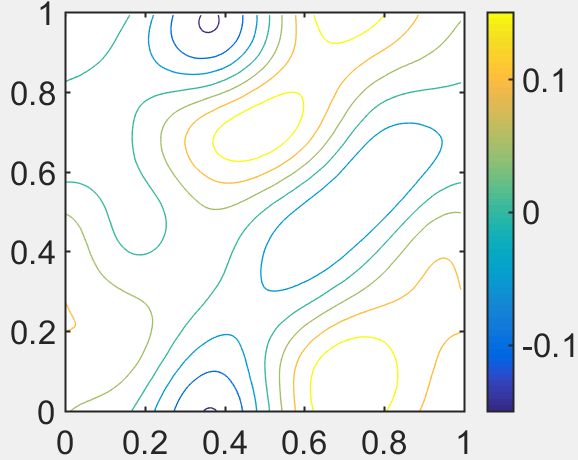}&
\includegraphics[width=0.42\columnwidth]{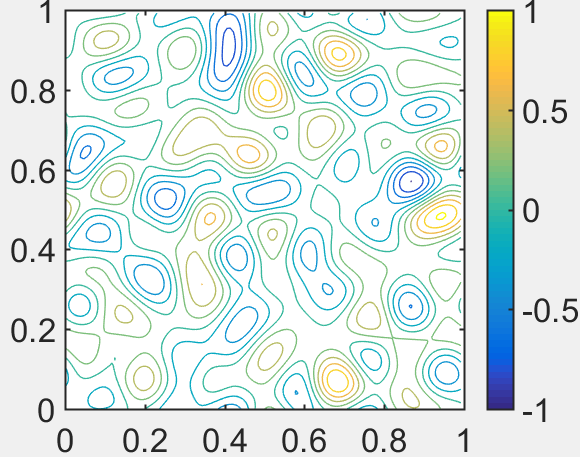}\\
(c) $\sigma=0.1$ & (d) $u(\mathbf{x}, 1)$\\
\end{tabular}
\caption{(d): terminal condition for Eq.~(\ref{PDE-Eq8}); (a), (b), and (c): solutions of the convection-diffusion equation, Eq.~(\ref{PDE-Eq8}), at $t=0$ with different diffusion coefficients $\sigma$.}
\label{fig-Solution-Advection-Diffusion}
\end{figure}

\subsection{Improving Robustness via Diffusion}
Using a specific level set of $u(\mathbf{x}, 0)$ in Fig.~\ref{fig-Solution-Advection-Diffusion} (a) for classification suffers from adversarial vulnerability: A tiny perturbation in $\mathbf{x}$ will lead the output to go across the level set, thus leading to misclassification. To mitigate this issue, we introduce a diffusion term $\frac{1}{2}\sigma^2\Delta u$ to Eq.~(\ref{PDE-Eq6}), with $\sigma$ being the diffusion coefficient and 
$$
\Delta = \frac{\partial^2}{\partial x_1^2} + \frac{\partial^2}{\partial x_2^2} +  \cdots + \frac{\partial^2}{\partial x_d^2},
$$
is the Laplace operator in $\mathbb{R}^d$. The newly introduced diffusion term makes the level sets of the TE more regular. This improves adversarial robustness of the classifier. Hence, we arrive at the following convection-diffusion equation
\begin{eqnarray}
\label{PDE-Eq8}
\begin{cases}
\frac{\partial u}{\partial t}(\mathbf{x}, t) + \overline{F}(\mathbf{x}, \mathbf{w}(t))\cdot \nabla u(\mathbf{x}, t) + \frac{1}{2}\sigma^2\Delta u(\mathbf{x}, t) = 0, \ \
\mathbf{x}\in \mathbb{R}^d,\ \ \  t\in [0, 1),\\
u(\mathbf{x}, 1) = f(\mathbf{x}).
\end{cases}
\end{eqnarray}
The solution of Eq.~(\ref{PDE-Eq8}) is much more regular when $\sigma\neq 0$ than when $\sigma=0$. We consider the solution of Eq.~(\ref{PDE-Eq8}) in a 2D unit square with periodic boundary conditions, and on each grid point of the mesh the velocity field $\overline{F}(\mathbf{x}, \mathbf{w}(t))$ is a random number sampled uniformly from $-1$ to $1$. The terminal condition is also randomly generated, as shown in Fig.~\ref{fig-Solution-Advection-Diffusion} (d). This 2D convection-diffusion equation is solved by the pseudo-spectral method with spatial and temporal step sizes being $1/128$ and $1\times 10^{-3}$, respectively. Figure~\ref{fig-Solution-Advection-Diffusion} (a), (b), and (c) illustrate the solutions when $\sigma=0$, $0.01$, and $0.1$, respectively. These show that as $\sigma$ increases, the solution becomes more regular, which makes the classifier more robust, but might be less accurate on clean data. The $\sigma$ should be selected to have a good trade-off between accuracy and robustness.
According to the above observation, instead of using $u(\mathbf{x}, 0)$ of the TE's solution for classification, we use that of the convection-diffusion equation.

\subsection{Theoretical Guarantees for the Surrogate Model}
We have the following theoretical guarantee for robustness of the solution of the convection-diffusion equation mentioned above.
\begin{theorem}\cite{Ladyzhenskaja:1968}\label{Theorem-Robust}
Let $\overline{F}(\mathbf{x}, t)$ be a Lipschitz function in both $\mathbf{x}$ and $t$, and $f(\mathbf{x})$ be a bounded function. Consider the following initial value problem of the convection-diffusion equation ($\sigma\neq 0$)
\begin{equation}\label{TE-Theorem}
\begin{cases}
\frac{\partial u}{\partial t}(\mathbf{x}, t) + \overline{F}(\mathbf{x}, \mathbf{w}(t))\cdot \nabla u(\mathbf{x}, t) + \frac{1}{2}\sigma^2\Delta u(\mathbf{x}, t) = 0, \ \
\mathbf{x}\in \mathbb{R}^d,\ \ \  t\in [0, 1),\\
u(\mathbf{x}, 1) = f(\mathbf{x}).
\end{cases}    
\end{equation}
Then, for any small perturbation $\delta$, we have $|u(\mathbf{x}+\delta, 0) - u(\mathbf{x}, 0)| \leq C\left(\frac{\|\delta\|_2}{\sigma}\right)^\alpha$ for some constant $\alpha>0$ if $\sigma \leq 1$. Here, $\|\delta\|_2$ is the $\ell_2$ norm of $\delta$, and $C$ is a constant that depends on $d$, $\|f\|_\infty$, and $\|\overline{F}\|_{L_{\mathbf{x}, t}^\infty}$. The meaning of notations $\|f\|_\infty$ and $\|\overline{F}\|_{L_{\mathbf{x}, t}^\infty}$ can be found in \cite{Ladyzhenskaja:1968}.
\end{theorem}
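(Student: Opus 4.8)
The plan is to recast the terminal-value problem as a well-posed forward Cauchy problem, dispose of the large-perturbation regime by the maximum principle, and then read off the bound from a classical interior Hölder estimate for uniformly parabolic equations after a parabolic rescaling. First I would reverse time: setting $\tau = 1-t$ and $v(\mathbf{x},\tau) = u(\mathbf{x},1-\tau)$ turns \eqref{TE-Theorem} into the forward convection-diffusion problem
\[
\partial_\tau v = \tfrac{1}{2}\sigma^2 \Delta v + \overline{F}(\mathbf{x}, \mathbf{w}(1-\tau))\cdot \nabla v, \qquad v(\mathbf{x},0) = f(\mathbf{x}),
\]
which is uniformly parabolic for $\sigma\neq 0$, hence well-posed, and satisfies $u(\mathbf{x},0) = v(\mathbf{x},1)$. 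By the maximum principle $\|v(\cdot,\tau)\|_\infty \le \|f\|_\infty$. This immediately handles $\|\delta\|_2 \ge \sigma$: there the right-hand side $C(\|\delta\|_2/\sigma)^\alpha \ge C$, so choosing $C \ge 2\|f\|_\infty$ makes the claim trivial. I may therefore assume $\rho := \|\delta\|_2/\sigma < 1$, and the content reduces to a \emph{small-scale} modulus-of-continuity estimate for $v(\cdot,1)$.

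Next I would normalize the diffusion by parabolic scaling. Fixing a center $\mathbf{x}_0$ and setting $w(\mathbf{y},s) = v(\mathbf{x}_0 + \sigma\mathbf{y}, s)$ gives
\[
\partial_s w = \tfrac{1}{2}\Delta_{\mathbf{y}} w + \tfrac{1}{\sigma}\,\overline{F}(\mathbf{x}_0 + \sigma\mathbf{y}, \mathbf{w}(1-s))\cdot \nabla_{\mathbf{y}} w, \qquad s\in[0,1],
\]
with $\|w\|_\infty \le \|f\|_\infty$, and a spatial perturbation $\delta$ becomes a perturbation $\delta/\sigma$ of size $\rho$ in $\mathbf{y}$. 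The target is now exactly a unit-scale estimate $|w(\mathbf{y}+\delta/\sigma,1) - w(\mathbf{y},1)| \le C\rho^\alpha$, which is the form delivered by the interior Hölder (De Giorgi–Nash–Moser) a priori estimate for parabolic equations with bounded measurable coefficients in \cite{Ladyzhenskaja:1968}, with $\alpha$ depending only on $d$ and the ellipticity ratio.

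The main obstacle is that the rescaled drift is bounded only by $\|\overline{F}\|_{L^\infty_{\mathbf{x},t}}/\sigma$, which blows up as $\sigma\to 0$, so a naive application of the Hölder estimate yields a constant that degenerates in $\sigma$, whereas the theorem asserts a $\sigma$-independent $C$. I would remove the drift by passing to the frame of its characteristics: let $\Phi_s$ be the flow of $\dot{\mathbf{y}} = \sigma^{-1}\overline{F}(\mathbf{x}_0+\sigma\mathbf{y}, \mathbf{w}(1-s))$ and substitute $\tilde v(\mathbf{z},s) = w(\Phi_s(\mathbf{z}), s)$. Crucially, since $\overline{F}$ is Lipschitz in $\mathbf{x}$, Gr\"onwall's inequality shows two characteristics separated by $r$ stay within $r\,e^{\mathrm{Lip}(\overline{F})}$ of each other; thus, although the flow itself may travel a distance $O(1/\sigma)$, it is bi-Lipschitz with constants $e^{\pm \mathrm{Lip}(\overline{F})}$ that are \emph{independent of} $\sigma$. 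The substitution cancels the first-order term and converts the equation into a divergence-form uniformly parabolic equation $\partial_s \tilde v = \mathrm{div}_{\mathbf{z}}(A\,\nabla_{\mathbf{z}}\tilde v)$ whose ellipticity constants depend only on $\mathrm{Lip}(\overline{F})$. The classical interior Hölder estimate then applies with a $\sigma$-independent constant, giving $[\tilde v(\cdot,1)]_{C^\alpha}\le C\|f\|_\infty$; transporting back through the uniformly bi-Lipschitz map and undoing the rescaling produces $|u(\mathbf{x}+\delta,0) - u(\mathbf{x},0)| \le C(\|\delta\|_2/\sigma)^\alpha$ with $C$ depending only on $d$, $\|f\|_\infty$, and $\|\overline{F}\|_{L^\infty_{\mathbf{x},t}}$.

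I expect the bookkeeping of the lower-order terms generated by the change of variables, namely the time dependence of $\Phi_s$ and the Jacobian factor, to be the only delicate step; everything else is a direct invocation of the cited estimate. As an alternative route that sidesteps the PDE estimate, one can use the Feynman–Kac representation $u(\mathbf{x},0) = \mathbb{E}[f(\mathbf{X}_1^{\mathbf{x}})]$ and bound $|u(\mathbf{x}+\delta,0)-u(\mathbf{x},0)| \le \|f\|_\infty\,\|p_{\mathbf{x}+\delta} - p_{\mathbf{x}}\|_{L^1}$, reducing the claim to a total-variation estimate between the two terminal diffusion laws, where the Gaussian part supplies the $\|\delta\|_2/\sigma$ factor and the bounded drift is absorbed by the same synchronous coupling as above.
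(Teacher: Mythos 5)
The paper contains no proof of this theorem: it is quoted directly from the book of Ladyzhenskaja, Solonnikov and Uraltseva \cite{Ladyzhenskaja:1968} (the paper only proves the companion gradient bound, Theorem~\ref{Initial-Value-Problem}), so your proposal must be judged against the classical parabolic-regularity argument the citation stands for. Your skeleton is the right one — time reversal to a well-posed forward problem, the maximum-principle bound $\|v(\cdot,\tau)\|_\infty\le\|f\|_\infty$, the trivial disposal of the regime $\|\delta\|_2\ge\sigma$, then a rescaling followed by an interior H\"older estimate — but your central device for making the constant $\sigma$-uniform fails as written. The root cause is that you rescaled only space, which is what manufactures the $O(1/\sigma)$ drift in the first place; and the Lagrangian substitution you use to remove it does not yield the divergence-form equation $\partial_s\tilde v=\mathrm{div}_{\mathbf{z}}(A\nabla_{\mathbf{z}}\tilde v)$ you assert. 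Writing $\Psi_s=\Phi_s^{-1}$, the transformed equation is the non-divergence-form $\partial_s\tilde v=\tfrac12 a_{ij}\partial_{z_iz_j}\tilde v+c_i\partial_{z_i}\tilde v$ with $a=\left(D\Psi\,D\Psi^{\top}\right)\circ\Phi_s$ and $c_i=(\Delta_{\mathbf{y}}\Psi^i)\circ\Phi_s$. The matrix $a$ is indeed uniformly elliptic with constants $e^{\pm2\,\mathrm{Lip}(\overline{F})}$, exactly as your Gr\"onwall argument shows, but the first-order coefficients $c_i$ involve \emph{second} derivatives of the flow of a merely Lipschitz field; these are not bounded functions (they are not even defined pointwise), so the cited H\"older estimate cannot be invoked with a $\sigma$-independent constant. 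Patching this would require mollifying $\overline{F}$ at scale $\sigma$ and tracking the resulting errors — considerably more than the ``bookkeeping'' you anticipate. Moreover, any route through the flow of $\overline{F}$ injects $\mathrm{Lip}(\overline{F})$ into $C$, whereas the theorem asserts dependence only on $d$, $\|f\|_\infty$, and $\|\overline{F}\|_{L^\infty_{\mathbf{x},t}}$.

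The repair is a one-line change: use the full parabolic scaling $\tilde u(\mathbf{y},s)=v(\sigma\mathbf{y},\sigma^2 s)$ with $s\in[0,\sigma^{-2}]$. Then $\partial_s\tilde u=\tfrac12\Delta\tilde u+\sigma\overline{F}(\sigma\mathbf{y},\mathbf{w}(1-\sigma^2 s))\cdot\nabla\tilde u$, whose drift is bounded by $\sigma\|\overline{F}\|_\infty\le\|\overline{F}\|_\infty$ precisely when $\sigma\le1$ — this is where the hypothesis $\sigma\le1$ enters, and why the constant depends only on $d$ and $\|\overline{F}\|_{L^\infty_{\mathbf{x},t}}$ (the Lipschitz hypothesis is needed only for classical solvability, not for the estimate). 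Since the evaluation time $s=\sigma^{-2}\ge1$, the unit cylinder $B_1(\mathbf{y}_0)\times(\sigma^{-2}-1,\sigma^{-2}]$ fits inside the domain, and the interior De~Giorgi--Nash/LSU estimate of \cite{Ladyzhenskaja:1968} gives $|\tilde u(\mathbf{y}_1,\sigma^{-2})-\tilde u(\mathbf{y}_2,\sigma^{-2})|\le C\|f\|_\infty\|\mathbf{y}_1-\mathbf{y}_2\|^\alpha$ for $\|\mathbf{y}_1-\mathbf{y}_2\|\le1$, which unwinds to exactly the stated bound. Your probabilistic fallback, by contrast, is sound: with the interpolating coupling $Y_t=X_t+(1-t)\delta$, Girsanov plus Pinsker yields a total-variation bound of order $(1+\mathrm{Lip}(\overline{F}))\|\delta\|_2/\sigma$, hence even $\alpha=1$ — but its constant again carries the Lipschitz norm, so it proves a variant of the theorem rather than the statement with its claimed constant dependence.
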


Furthermore, we have the following bound for the gradient of the solution of the convection-diffusion equation.
\begin{theorem}\label{Initial-Value-Problem}
Let $u_0(\mathbf{x})$ be a compactly supported function and $\overline{F}\in C^1(\mathbf{R}^d\times [0, 1])$. For the following initial value problem of the convection-diffusion equation
\begin{equation}\label{IVP}
\begin{cases}
\frac{\partial u}{\partial t}(\mathbf{x}, t) + \overline{F}(\mathbf{x}, \mathbf{w}(t))\cdot \nabla u(\mathbf{x}, t) = \frac{1}{2}\sigma^2\Delta u(\mathbf{x}, t), \ \
\mathbf{x}\in \mathbb{R}^d,\ \ \  t\in [0, 1),\\
u(\mathbf{x}, 0) = u_0(\mathbf{x}),
\end{cases}    
\end{equation}
we have
\begin{equation}
\label{IVP-Estimate}
\|\nabla u(\mathbf{x}, 1)\|_\infty \leq e^{-\sigma^2}e^\gamma \left(\|u_0\|_\infty + \|\nabla u_0\|_\infty\right),
\end{equation}
where $\gamma$ is a constant depends on $\nabla \overline{F}$.
\end{theorem}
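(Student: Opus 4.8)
The plan is to establish (\ref{IVP-Estimate}) by a Bernstein-type maximum-principle argument, treating (\ref{IVP}) as a genuine forward initial-value problem (which is well posed because $\sigma\neq 0$ gives the diffusion the correct sign). First I would record the zeroth-order control on $u$ itself: since (\ref{IVP}) carries no undifferentiated $u$ term, the parabolic maximum principle yields $\|u(\cdot,t)\|_\infty\le\|u_0\|_\infty$ for all $t\in[0,1]$. The compact support of $u_0$, propagated by the Gaussian smoothing of the $\tfrac12\sigma^2\Delta$ operator, ensures $u(\cdot,t)$ and its first two derivatives decay at spatial infinity, so the spatial maxima used below are attained and the maximum principle is legitimate on all of $\mathbb{R}^d$.

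Second, I would differentiate the equation in each coordinate. Writing $v_k=\partial u/\partial x_k$, the function $v_k$ solves $\partial_t v_k+\overline{F}\cdot\nabla v_k+(\partial_{x_k}\overline{F})\cdot\nabla u=\tfrac12\sigma^2\Delta v_k$, and for $\phi:=|\nabla u|^2=\sum_k v_k^2$ a direct computation (chain rule together with $\Delta\phi=2|\nabla^2u|^2+2\sum_k v_k\Delta v_k$) gives the pointwise identity
\[
\partial_t\phi+\overline{F}\cdot\nabla\phi-\tfrac12\sigma^2\Delta\phi
=-2\sum_{k,j}\frac{\partial\overline{F}_j}{\partial x_k}\,v_jv_k-\sigma^2\sum_{k,m}\Big(\frac{\partial^2u}{\partial x_k\,\partial x_m}\Big)^2 .
\]
The first term on the right is bounded by $2\|\nabla\overline{F}\|_{\infty}\,\phi$ (here $\|\nabla\overline{F}\|_\infty$ is the supremum over $\mathbb{R}^d\times[0,1]$ of the Jacobian, finite since $\overline{F}\in C^1$ on a compact time interval), while the second term, the dissipation, is non-positive.

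Third, I would pass to the spatial envelope $\psi(t):=\max_{\mathbf{x}}\phi(\mathbf{x},t)$. At a maximizing point the convective term $\overline{F}\cdot\nabla\phi$ vanishes and $\tfrac12\sigma^2\Delta\phi\le0$, so the identity collapses to a differential inequality $\psi'(t)\le 2\|\nabla\overline{F}\|_\infty\,\psi(t)-\sigma^2(\ldots)$; Gr\"onwall then produces the factor $e^{\gamma}$ with $\gamma$ controlled by $\nabla\overline{F}$ after taking the square root of $\phi$. To also capture the $\|u_0\|_\infty$ contribution and, more importantly, the diffusive gain $e^{-\sigma^2}$, I would run the same computation on the augmented Bernstein quantity $\phi_\lambda:=|\nabla u|^2+\lambda u^2$: the cross terms generated by the zeroth-order piece introduce $\|u\|_\infty\le\|u_0\|_\infty$, while the retained dissipation $-\sigma^2\sum(\partial^2u)^2$ is what must be converted into the exponential decay $e^{-\sigma^2}$.

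The main obstacle is exactly this last conversion. On $\mathbb{R}^d$ there is no spectral gap, so the dissipation $-\sigma^2|\nabla^2u|^2$ does not dominate $-\sigma^2\phi$ through a naive Poincar\'e inequality, and extracting the precise constant $e^{-\sigma^2}$ (rather than merely ``no growth from diffusion'') is delicate; I expect this step to require either the compact-support-induced Gaussian decay to localize the estimate, or a direct heat-kernel / Feynman--Kac representation $u(\mathbf{x},t)=\mathbb{E}[u_0(X^{\mathbf{x}}_t)]$ in which the tangent (first-variation) process $\nabla X$ solves a linear SDE with drift $\nabla\overline{F}$. The stochastic route delivers the $e^{\gamma}$ factor transparently via Gr\"onwall on $\mathbb{E}\|\nabla X\|$, and the $\sigma$-dependence of the driving Brownian term must then be tracked to recover $e^{-\sigma^2}$. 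Relative to the rest of the argument, justifying the maximum principle on the unbounded domain and pinning down the sharp diffusive constant are the only genuinely non-routine points; the differentiation, the bound on the drift term by $\nabla\overline{F}$, and the Gr\"onwall step are mechanical.
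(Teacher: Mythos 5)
Your skeleton matches the paper's proof almost exactly---a Bernstein quantity combining $u^2$ and $\|\nabla u\|^2$ with an exponential-in-time weight, pushed through the parabolic maximum principle, with the drift term bounded via $\gamma$ depending on $\nabla\overline{F}$---but you stop precisely where the proof actually happens, and you misidentify the mechanism. You attribute the factor $e^{-\sigma^2}$ to the Hessian dissipation $-\sigma^2\|\nabla\nabla u\|_F^2$ and then (correctly) observe that no Poincar\'e-type inequality on $\mathbb{R}^d$ converts it into decay of $\|\nabla u\|^2$; in the paper that term is simply discarded. The gain comes instead from the \emph{zeroth-order} component of the Bernstein quantity: since $\Delta(u^2)=2u\Delta u+2\|\nabla u\|^2$, the evolution of $u^2$ carries its own dissipation,
\[
\frac{\partial (u^2)}{\partial t} + \overline{F}\cdot \nabla(u^2) = \sigma^2\Delta(u^2) - 2\sigma^2\|\nabla u\|^2
\]
(in the paper's normalization), which is exactly proportional to the gradient part of the quantity being estimated. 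Setting $w=\left(\mu u^2+\|\nabla u\|^2\right)e^{-2\lambda t}$, the computation collapses to
\[
Lw \leq -2e^{-2\lambda t}\left[\lambda\mu u^2 + (\lambda+\mu\sigma^2-\gamma)\|\nabla u\|^2\right],
\]
and the choice $\mu=1$, $\lambda=\gamma-\sigma^2$ annihilates the gradient coefficient, gives $Lw\le 0$, and the maximum principle plus unwinding the weight at $t=1$ yields exactly $e^{-\sigma^2}e^{\gamma}\left(\|u_0\|_\infty+\|\nabla u_0\|_\infty\right)$. So the ``delicate conversion'' you flag as the main obstacle is a one-line algebraic observation about where the cross term from $u^2$ lands, not a spectral-gap problem; your augmented quantity $\phi_\lambda=|\nabla u|^2+\lambda u^2$ is the right object, but you assigned its $u^2$ piece the passive role of importing $\|u_0\|_\infty$ rather than the active role of damping $\|\nabla u\|^2$.

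The fallback routes you propose would not rescue the step either. The first-variation process of $d\mathbf{x}=\overline{F}\,dt+\sigma\,dB_t$ satisfies $d(\nabla \mathbf{x})=\nabla\overline{F}\,\nabla\mathbf{x}\,dt$ because the noise is additive, so the tangent process does not see $\sigma$ at all: Gr\"onwall on $\mathbb{E}\|\nabla \mathbf{x}\|$ gives only $\|\nabla u(\cdot,1)\|_\infty\le e^{\gamma}\|\nabla u_0\|_\infty$, with no $\sigma$-dependence and no $\|u_0\|_\infty$ term; an integration-by-parts (Bismut--Elworthy--Li) representation produces bounds of the shape $C\sigma^{-1}\|u_0\|_\infty$, which is the wrong functional form. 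Two smaller points: your concern about justifying the maximum principle on the unbounded domain is legitimate and is exactly what the compact-support hypothesis is for (the paper glosses this), and note that the paper's choice $\lambda=\gamma-\sigma^2$ tacitly needs $\lambda\ge 0$ (i.e.\ $\gamma\ge\sigma^2$) for the $u^2$ coefficient to have the right sign---a caveat consistent with the theorem's small-$\sigma$ regime, but one your envelope/Gr\"onwall framing would have had to confront as well.
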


\begin{proof}
Let $w(\mathbf{x}, t)=\left(\mu u^2(\mathbf{x}, t) + \|\nabla u(\mathbf{x}, t)\|^2\right)e^{-2\lambda t}$, where $\mu$ and $\lambda$ are constants which will be defined later.

\noindent Note that $u^2(\mathbf{x}, t)$ satisfies
$$
\frac{\partial (u^2)}{\partial t} + \overline{F}(\mathbf{x}, t)\cdot \nabla(u^2) = \sigma^2\Delta(u^2) - 2\sigma^2\|\nabla u\|^2,
$$
and $\|\nabla u\|^2$ satisfies
$$
\frac{\partial \|\nabla u\|^2}{\partial t} + \overline{F}(\mathbf{x}, t)\cdot \nabla\|\nabla u\|^2=-2\nabla u\cdot \nabla \overline{F}\cdot \nabla u +\sigma^2\Delta \|\nabla u\|^2 - 2\sigma^2\|\nabla\nabla u\|_F^2,
$$
therefore,
{\small
$$
\frac{\partial w}{\partial t} + \overline{F}\cdot \nabla w - \sigma^2\nabla w = e^{-2\lambda t}\left[ -2\lambda (\mu u^2+\|\nabla u\|^2) - 2\mu\sigma^2\|\nabla u\|^2 -2\nabla u\cdot \nabla \overline{F}\cdot \nabla u -2\sigma^2 \|\nabla\nabla u\|_F^2\right].
$$}
Next, let $\gamma(\mathbf{x}, t) = \min_{\|\xi\|=1}\xi\cdot\nabla \overline{F}\cdot \xi$ and $\gamma = -\min_{\mathbf{x}, t}\gamma(\mathbf{x}, t)$, then we have
$$
Lw:=\frac{\partial w}{\partial t} + \overline{F}\cdot \nabla w - \sigma^2\nabla w \leq -2e^{-2\lambda t}\left[ \lambda\mu u^2 + (\lambda + \mu\sigma^2-\gamma)\|\nabla u\|^2 \right].
$$
If we choose $\lambda$ and $\mu$ large enough, such that $\lambda+\mu\sigma-\gamma\geq 0$, then 
$$
Lw \leq 0.
$$
From the maximum principle, we know $\max_\mathbf{x}w(\mathbf{x}, 1)\leq \max_\mathbf{x}w(\mathbf{x}, 0)$, i.e.,
$$
\max_\mathbf{x} e^{-2\lambda}\left(\mu u^2(\mathbf{x}, 1) + \|\nabla u(\mathbf{x}, 1)\|^2 \right)\leq \max_\mathbf{x}\left( \mu u^2(\mathbf{x}, 0) + \|\nabla u(\mathbf{x}, 0)\|^2\right).
$$
Hence,
$$
\|\nabla u(\mathbf{x}, 1)\|_\infty^2 \leq e^{2\lambda}\left(\mu\|u_0\|_\infty^2 + \|\nabla u_0\|_\infty^2 \right).
$$
Let $\mu=1$ and $\lambda=\gamma - \sigma^2$, we have
$$
\|\nabla u(\mathbf{x}, 1)\|_\infty \leq e^{-\sigma^2}\left(e^\gamma\|u_0\|_\infty + e^\gamma\|\nabla u_0\|_\infty\right).
$$
\end{proof}

\begin{remark}
Similar estimate in Theorem~\ref{Initial-Value-Problem} can be established on $u(\mathbf{x}, 0)$ for the terminal value problem of the convection diffusion equation in Eq.~(\ref{TE-Theorem}) by reverse time.
\end{remark}

\section{Algorithms}\label{Section-Algorithms}
\subsection{ResNets Ensemble via the Feynman-Kac Formula}
Based on the above discussion, if we use the solution of the convection-diffusion equation, Eq.~(\ref{PDE-Eq8}), for classification. The resulted classifier will be more resistant to adversarial attacks. In this part, we will present an ensemble of ResNets to approximate the solution of Eq.~(\ref{PDE-Eq8}). In the Section.~\ref{Section-Results}, we will verify that the robustly trained special ensemble of ResNets is more accurate on both clean and adversarial images than standard ResNets.
\medskip

The convection-diffusion equation, Eq.~(\ref{PDE-Eq8}), can be solved using the Feynman-Kac formula \cite{FCFormula:1949} in high dimensional space, which gives  $u(\hat{\mathbf{x}}, 0)$ as
\begin{equation}
\label{PDE-Eq9}
u(\hat{\mathbf{x}}, 0) = \mathbb{E}\left[f(\mathbf{x}(1))|\mathbf{x}(0)=\hat{\mathbf{x}}\right],
\end{equation}
where $\mathbf{x}(t)$ is an It\^{o} process,
$$d\mathbf{x}(t) = \overline{F}(\mathbf{x}(t), \mathbf{w}(t))dt + \sigma dB_t,$$
and $u(\hat{\mathbf{x}}, 0)$ is the conditional expectation of $f(\mathbf{x}(1))$.
\medskip

Next, we approximate the Feynman-Kac formula by an ensemble of modified ResNets in the following way:
Accoding to the Euler-Maruyama method \cite{Kloeden:1992}, the term $\sigma dB_t$ in the It\^{o} process that can be approximated by adding a specially designed Gaussian noise, 
$\sigma \mathcal{N}(\mathbf{0}, \mathbf{I})$, where $\sigma = a\sqrt{{\rm Var}(\mathbf{x}_l+\mathcal{F}(\mathbf{x}_l))}$ with $a$ being a tunable parameter, 
to each original residual mapping $\mathbf{x}_{l+1} = \mathbf{x}_l + \mathcal{F}(\mathbf{x}_l)$ in the ResNet. This gives the modified residual mapping $\mathbf{x}_{l+1} = \mathbf{x}_l + \mathcal{F}(\mathbf{x}_l) + \sigma \mathcal{N}(\mathbf{0}, \mathbf{I})$, as illustrated in Fig.~\ref{fig:Structure} (b).
Let ResNet' denote the modified ResNet where we inject noise to each residual mapping of the original ResNet. In a nutshell, ResNet's approximation to the Feynman-Kac formula is an ensemble of jointly trained ResNet' as illustrated in Fig.~\ref{fig:Structure} (c). \footnote{To ease the notation, in what follows, we use ResNet in place of ResNet' when there is no ambiguity.}
We call this ensemble of ResNets as EnResNet. For instance, if the base ResNet is ResNet20, an ensemble of $n$ ResNet20 is denoted as En$_n$ResNet20.

\subsection{Adversarial Attacks}
In this subsection, we review a few widely used adversarial attacks. These attacks will be used to train robust EnResNets and attack the trained models. We attack the trained model, $f(\mathbf{x}, \mathbf{w})$, by $\ell_\infty$ norm based (the other norm based attacks can be formulated similarly) untargeted fast gradient sign method (FGSM), iterative FGSM (IFGSM) \cite{Goodfellow:2014AdversarialTraining}, and Carlini-Wagner (C\&W) \cite{CWAttack:2016} attacks in both white-box and blind fashions. In blind attacks, we use the target model to classify the adversarial images crafted by attacking the oracle model in a white-box approach. For a given instance ($\mathbf{x}$, $y$):
\begin{itemize}
\item FGSM searches the adversarial image $\mathbf{x}'$ by maximizing the loss function
$\mathcal{L}(\mathbf{x}', y) \doteq \mathcal{L}(f(\mathbf{x}', \mathbf{w}), y)$,
subject to the constraint $||\mathbf{x}'-\mathbf{x}||_\infty \leq \epsilon$ with $\epsilon$ being the maximum perturbation. For the linearized loss function, $\mathcal{L}(\mathbf{x}', y) \approx \mathcal{L}(\mathbf{x}, y) + \nabla_x\mathcal{L}(\mathbf{x}, y)^T \cdot (\mathbf{x}' - \mathbf{x})$,
the optimal adversarial is
\begin{equation}\label{FGSM}
\mathbf{x}'=\mathbf{x} + \epsilon \cdot {\rm sign} \left( \nabla_\mathbf{x}\mathcal{L}(\mathbf{x}, y) \right).
\end{equation}
\item IFGSM, Eq.~(\ref{IFGSM-2}), iterates FGSM with step size $\alpha$ and clips the perturbed image to generate the enhanced adversarial attack, 
\begin{equation}
\label{IFGSM-2}
\mathbf{x}^{(m)} = {\rm Clip}_{\mathbf{x}, \epsilon}\{\mathbf{x}^{(m-1)} + \alpha \cdot {\rm sign} ( \nabla_{\mathbf{x}} \mathcal{L}(\mathbf{x}^{(m-1)}, y) )\},
\end{equation}
where $m=1, \cdots, M$, $\mathbf{x}^{(0)} = \mathbf{x}$, and let the adversarial image be $\mathbf{x}'=\mathbf{x}^{(M)}$ with $M$ being the total number of iterations.
\item C\&W attack searches the targeted adversarial image by solving 
\begin{equation}
\label{cwl2-eq1}
\min_{\delta} ||\delta||_\infty,\ \ \mbox{subject to}\ \ f(\mathbf{w}, \mathbf{x}+\delta) = t, \; \mathbf{x}+\delta \in [0, 1]^d,
\end{equation}
where $\delta$ is the adversarial perturbation and $t$ is the target label. 
Carlini et al. \cite{CWAttack:2016} proposed the following approximation to Eq.~(\ref{cwl2-eq1}),
\begin{eqnarray}
\label{CWL2}
\min_{\mathbf{u}} ||\frac{1}{2}\left(\tanh(\mathbf{u}) + 1\right) - \mathbf{x} ||_\infty + \\ \nonumber
c\cdot \max\left\{-\kappa, \max_{i\neq t}(Z(\frac{1}{2}(\tanh(\mathbf{u}))+1)_i)
- Z(\frac{1}{2}(\tanh(\mathbf{u}))+1)_t \right\},
\end{eqnarray}
where $Z(\cdot)$ is the logit vector for the input, i.e., the output of the DNN before the softmax layer. This unconstrained optimization problem 
can be solved efficiently by using the Adam optimizer \cite{Kingma:2014Adam}. Dou et al. \cite{Dou:2018}, prove that, under a certain regime, C\&W can shift the DNNs' predicted probability distribution to the desired one.
\end{itemize}
\medskip

All three attacks clip the pixel values of the adversarial image to between 0 and 1. In the following experiments, we set $\epsilon = 8/255$ in both FGSM and IFGSM attacks. Additionally, in IFGSM we set $m=20$ and $\alpha=2/255$, and denote it as IFGSM$^{20}$. For C\&W attack, we run 50 iterations of Adam with learning rate $6\times 10^{-4}$ and set $c=10$ and $\kappa=0$.

\subsection{Robust Training of EnResNets}
We use the PGD adversarial training \cite{Madry:2018}, i.e., solving EARM Eq.~(\ref{Adversarial-ERM}) by replacing $\mathbf{x}'$ with the PGD adversarial one, to robustly train EnResNets with $\sigma=0.1$ on both CIFAR10 and CIFAR100 \cite{Alex:2009-Cifar} benchmarks with standard data augmentation \cite{ResNet}. The attack in the PGD adversarial training is merely IFGSM with an initial random perturbation on the clean data. We summarize the PGD based robust training for EnResNets in Algorithm~\ref{PGD-EnResNetTraining}. Other methods to solve EARM can also be used to train EnResNets, e.g., approximation to the adversarial risk function and regularization. EnResNet enriches the hypothesis class $\mathcal{H}$, to make the classifiers from $\mathcal{H}$ more adversarially robust. All computations are carried out on a machine
with a single Nvidia Titan Xp graphics card.
\medskip

\begin{algorithm}[h] 
\caption{Training of the EnResNet by PGD Adversarial Training}\label{PGD-EnResNetTraining}
\begin{algorithmic}
\State \textbf{Input:} Training set: $(\mathbf{X}_i, \mathbf{Y}_i)_{i=1}^{N_B}$, $N_B = \# {\rm minibatches}$, perturbation $\epsilon$, and step size $\alpha$.
\State \textbf{Output: } A robustly trained En$_N$ResNet, i.e., an ensemble of $N$ modified ResNets.
\For {$i = 1, \dots, N_E$ (where $N_E$ is the number of epochs.)}
\For{$j = 1, $\dots$, N_B$}
\State //{\it PGD attack}
\State Add uniform noise in the range $[-\epsilon, \epsilon]$ to $\mathbf{X}_i$, denote the resulted images as $\tilde{\mathbf{X}}_i$.
\State Attack $\tilde{\mathbf{X}}_i$ by 10 iterations IFGSM attacks with maximum perturbation $\epsilon$ and step size $\alpha$. And denote the adversarial images as $\mathbf{X}'_i$.
\State //{\it Forward-propagation}
\State Generate prediction $\tilde{\mathbf{Y}}_i = {\rm En}_N{\rm ResNet}(\mathbf{X}'_i)$ for $\mathbf{X}'_i$ by the current model En$_N$ResNet.
\State //{\it Back-propagation}
\State Back-Propagate the cross-entropy loss between $\mathbf{Y}_i$ and $\tilde{\mathbf{Y}}_i$ to update the model EnResNet$_N$.
\EndFor
\EndFor
\end{algorithmic}
\end{algorithm}

\section{Numerical Results}\label{Section-Results}
In this section, we numerically verify that the robustly trained EnResNets are more accurate, on both clean and adversarial data of the CIFAR10 and CIFAR100, than robustly trained ResNets and ensemble of ResNets without noise injection. To avoid the gradient mask issue of EnResNets due to the noise injection in each residual mapping, we use the Expectation over Transformation (EOT) strategy \cite{Athalye:2018B} to compute the gradient which is averaged over five independent runs.

\subsection{Natural and Robust Accuracies of Robustly Trained EnResNets}
In robust training, we run 200 epochs of the PGD adversarial training (10 iterations of IFGSM with $\alpha=2/255$ and $\epsilon=8/255$, and an initial random perturbation of magnitude $\epsilon$) with initial learning rate $0.1$, 
which decays by a factor of $10$ at the 80th, 120th, and 160th epochs. The training data is split into 45K/5K for training and validation, the model with the best validation accuracy is used for testing. Similar settings are used for natural training, i.e., solving the ERM problem Eq.~(\ref{ERM}). En$_1$ResNet20 denotes the ensemble of only one ResNet20 which is merely adding noise to each residual mapping, and similar notations apply to other DNNs.
\medskip

First, we show that the ensemble of noise injected ResNets can improve the natural generalization of the naturally trained models. As shown in Table~\ref{ERM-Cifar10}, the naturally trained ensemble of multiple ResNets are always generalize better on the clean images than the base ResNets. This conclusion is verified by ResNet20, ResNet44, and ResNet110. However, the natural accuracy of the robustly trained models are much less than that of the naturally trained models. For instance, the natural accuracies of the robustly trained and naturally trained ResNet20 are, respectively, $75.11$\% and $92.10$\%. The degradation of natural accuracies in robust training are also confirmed by experiments on ResNet44 ($78.89$\% v.s. $93.22$\%) and ResNet110 ($82.19$\% v.s. $94.30$\%). Improving natural accuracy of the robustly trained models is another important issue during adversarial defense.
\medskip

\begin{table}[tp]
\centering
\fontsize{8.5}{8.5}\selectfont
\begin{threeparttable}
\caption{Natural accuracies of naturally trained ResNet20 and different ensemble of noise injected ResNet20 on the CIFAR10 dataset. Unit: \%.}\label{ERM-Cifar10}
\begin{tabular}{ccc}
\toprule[1.0pt]
\ \ \ \ \ \ \ \ \ \ \ \ \ \ \ \ \ \ \  Model\ \ \ \ \ \ \ \ \ \ \ \ \ \ \ \ \ \ \  &\ \ \ \ \ \ \ \ \ \ \ \ \ \ \ \ \ \ \   dataset\ \ \ \ \ \ \ \ \ \ \ \ \ \ \   &\ \ \ \ \ \ \ \ \ \ \ \ \ \ \   $\mathcal{A}_{\rm nat}$\ \ \ \ \ \ \ \ \ \ \ \ \ \ \ \    \cr
\midrule[0.8pt]
ResNet20        & CIFAR10  & 92.10\cr
En$_1$ResNet20  & CIFAR10  & 92.59\cr
En$_2$ResNet20  & CIFAR10  & 92.60\cr
En$_5$ResNet20  & CIFAR10  & 92.74\cr
ResNet44        & CIFAR10  & 93.22\cr
En$_1$ResNet44  & CIFAR10  & 93.37\cr
En$_2$ResNet44  & CIFAR10  & 93.54\cr
ResNet110       & CIFAR10  & 94.30\cr
En$_2$ResNet110 & CIFAR10  & 93.49\cr
\bottomrule[1.0pt]
\end{tabular}
\end{threeparttable}
\end{table}

Second, consider natural ($\mathcal{A}_{\rm nat}$) and robust ($\mathcal{A}_{\rm rob}$) accuracies of the PGD adversarially trained models on the CIFAR10, where $\mathcal{A}_{\rm nat}$ and $\mathcal{A}_{\rm rob}$ are measured on clean and adversarial images, respectively. All results are listed in Table~\ref{EARM-Cifar10}. 
The robustly trained ResNet20 has accuracies $50.89$\%, $46.03$\% (close to that reported in \cite{Madry:2018}), and $58.73$\%, respectively, under the FGSM, IFGSM$^{20}$, and C\&W attacks. Moreover, it has a natural accuracy of $75.11$\%. En$_5$ResNet20 boosts natural accuracy to $82.52$\%, and improves the corresponding robust accuracies to $58.92$\%, $51.48$\%, and $67.73$\%, respectively. Simply injecting noise to each residual mapping of ResNet20 
can increase $\mathcal{A}_{\rm nat}$ by $\sim 2\%$ and $\mathcal{A}_{\rm rob}$ by $\sim 3\%$ under the IFGSM$^{20}$ attack. 
The advantages of EnResNets are also verified by experiments on ResNet44, ResNet110, and their ensembles. 
Note that ensemble of high capacity ResNet is more robust than low capacity model: as shown in Table~\ref{EARM-Cifar10}, En$_2$ResNet110 is more accurate than En$_2$ResNet44 which in turn is more accurate than En$_2$ResNet20 in classifying both clean and adversarial images. 
The robustly trained En$_1$WideResNet34-10 has $86.19$\% and $56.60$\%, respectively, natural and robust accuracies under the IFGSM$^{20}$ attack. Compared with the current state-of-the-art \cite{Zhang:2019-Trades}, En$_1$WideResNet34-10 has almost the same robust accuracy ($56.60\%$ v.s. $56.61\%$) under the IFGSM$^{20}$ attack but better natural accuracy ($86.19\%$ v.s. $84.92\%$). Figure~\ref{fig-ResNet-Acc-Evolution} plots the evolution of training and validation accuracies of ResNet20 and ResNet44 and their different ensembles.
\medskip

\begin{table}[!h]
\centering
\fontsize{8.5}{8.5}\selectfont
\begin{threeparttable}
\caption{Natural and robust accuracies of different base and noise injected ensembles of robustly trained ResNets on the CIFAR10. Unit: \%.}\label{EARM-Cifar10}
\begin{tabular}{cccccc}
\toprule[1.0pt]
Model & dataset &\ \ \ $\mathcal{A}_{\rm nat}$\ \ \  &\ \ \  $\mathcal{A}_{\rm rob}$ (FGSM)\ \ \  &\ \ \  $\mathcal{A}_{\rm rob}$ (IFGSM$^{20}$)\ \ \  &\ \ \  $\mathcal{A}_{\rm rob}$ (C\&W)\ \ \  \cr
\midrule[0.8pt]
ResNet20        & CIFAR10  &  75.11  & 50.89  & 46.03 & 58.73  \cr
En$_1$ResNet20  & CIFAR10  &  77.21  & 55.35  & 49.06 & 65.69  \cr
En$_2$ResNet20  & CIFAR10  &  80.34  & 57.23  & 50.06 & 66.47  \cr
En$_5$ResNet20  & CIFAR10  &  82.52  & 58.92  & 51.48 & 67.73  \cr
ResNet44        & CIFAR10  &  78.89  & 54.54  & 48.85 & 61.33  \cr
En$_1$ResNet44  & CIFAR10  &  82.03  & 57.80  & 51.83 & 66.00  \cr
En$_2$ResNet44  & CIFAR10  &  82.91  & 58.29  & 51.86 & 66.89  \cr
ResNet110       & CIFAR10  &  82.19  & 57.61  & 52.02 & 62.92  \cr
En$_2$ResNet110 & CIFAR10  &  82.43  & 59.24  & 53.03 & 68.67  \cr
En$_1$WideResNet34-10 & CIFAR10 & {\bf 86.19} & {\bf 61.82}  & {\bf 56.60} & {\bf 69.32} \cr
\bottomrule[1.0pt]
\end{tabular}
\end{threeparttable}
\end{table}

\begin{figure}
\centering
\begin{tabular}{cc}
\includegraphics[width=0.46\columnwidth]{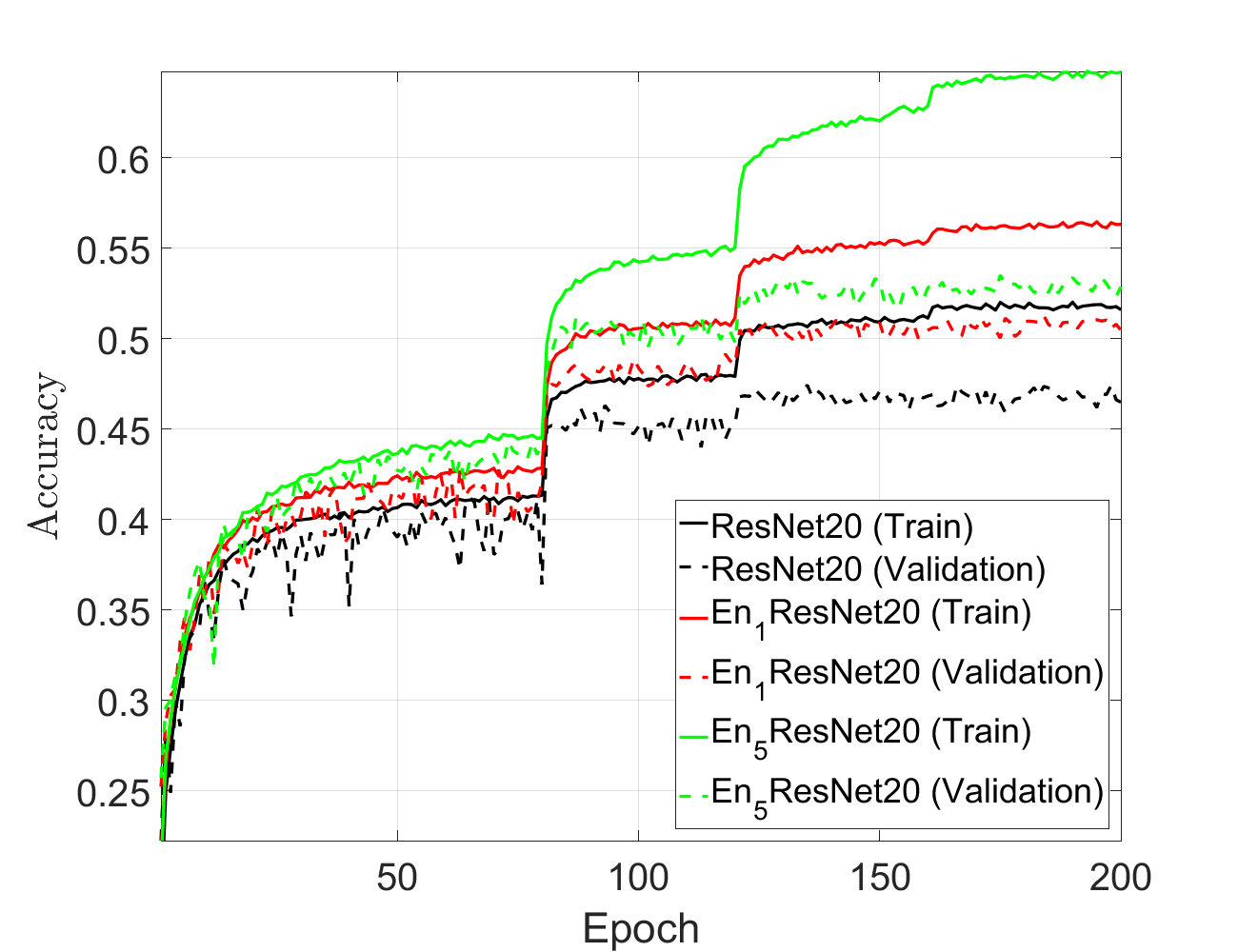}&
\includegraphics[width=0.46\columnwidth]{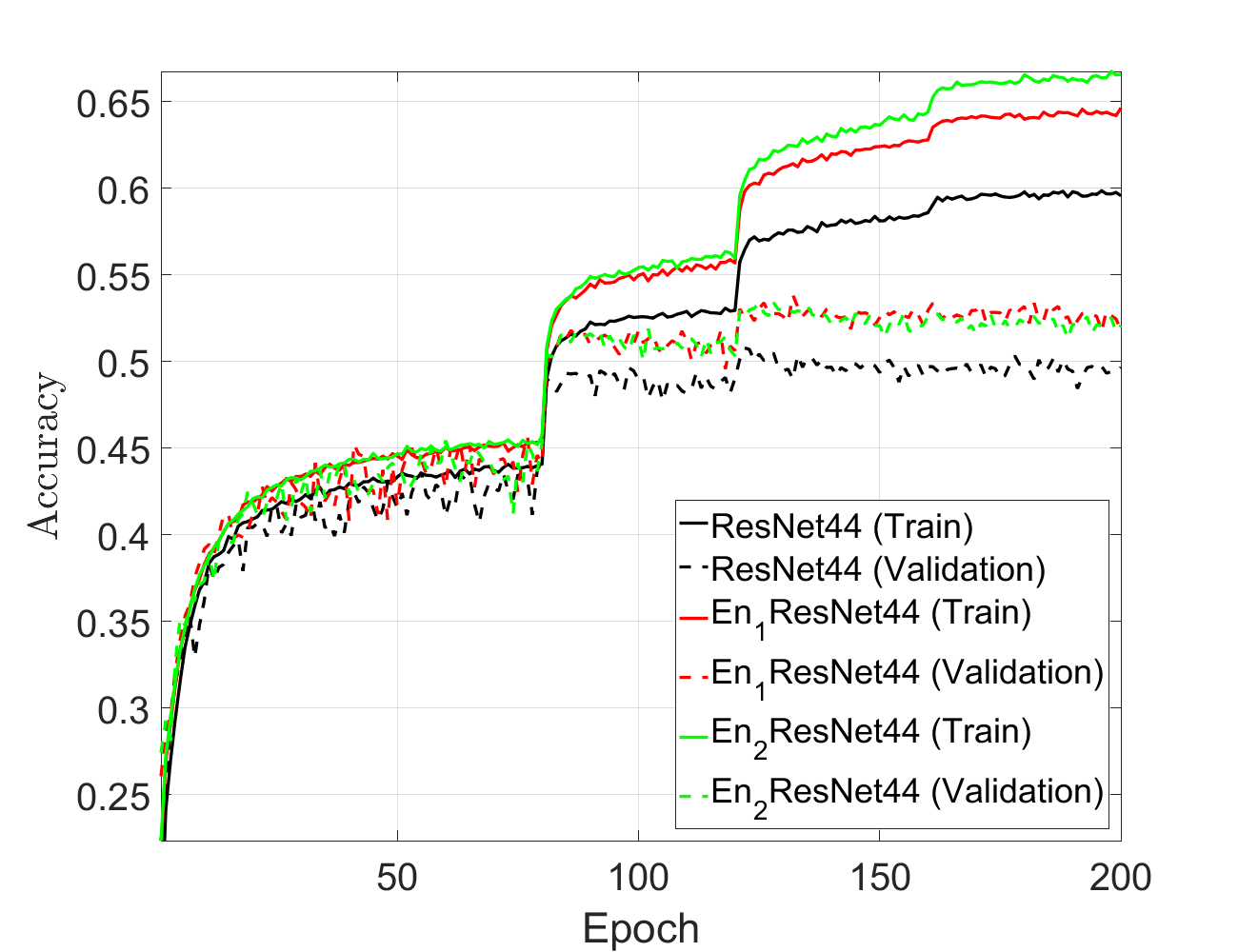}\\
(a) & (b)\\
\end{tabular}
\caption{Evolution of training and validation accuracy. (a): ResNet20 and different ensembles of noise injected ResNet20. (b): ResNet44 and different ensembles of noise injected ResNet44.}
\label{fig-ResNet-Acc-Evolution}
\end{figure}

Third, consider accuracy of the robustly trained models under blind attacks. In this scenario, we use the target model to classify the adversarial images crafted by applying FGSM, IFGSM$^{20}$, and C\&W attacks to the oracle model. As listed in Table~\ref{EARM-Cifar10-BlindAttack}, EnResNets are always more robust than the base ResNets under different blind attacks. For instance, when En$_5$ResNet20 is used to classify adversarial images crafted by attacking ResNet20 with FGSM, IFGSM$^{20}$, and C\&W attacks, the accuracies are $64.07$\%, $62.99$\%, and $76.57$\%, respectively. Conversely, the accuracies of ResNet20 are only $61.69$\%, $58.74$\%, and $73.77$\%, respectively, in classifying adversarial images obtained by using the above three attacks to attack En$_5$ResNet20.
\medskip

\begin{table}[!h]
\centering
\fontsize{8.5}{8.5}\selectfont
\begin{threeparttable}
\caption{Accuracies of robustly trained models on adversarial images of CIFAR10 crafted by attacking the oracle model with different attacks. Unit: \%.}\label{EARM-Cifar10-BlindAttack}
\begin{tabular}{cccccc}
\toprule[1.0pt]
\ \ \ Model\ \ \  &\ \ \  dataset\ \ \  &\ \ \  Oracle \ \ \  &\ \ \  $\mathcal{A}_{\rm rob}$ (FGSM)\ \ \  & \ \ \ $\mathcal{A}_{\rm rob}$ (IFGSM$^{20}$)\ \ \  &\ \ \  $\mathcal{A}_{\rm rob}$ (C\&W)\ \ \ \cr
\midrule[0.8pt]
ResNet20        & CIFAR10  & En$_5$ResNet20    & 61.69  & 58.74 & 73.77  \cr
En$_5$ResNet20  & CIFAR10  & ResNet20          & 64.07  & 62.99 & 76.57  \cr
ResNet44        & CIFAR10  & En$_2$ResNet44    & 63.87  & 60.66 & 75.83  \cr
En$_2$ResNet44  & CIFAR10  & ResNet44          & 64.52  & 61.23 & 76.99  \cr
ResNet110       & CIFAR10  & En$_2$ResNet110   & 64.19  & 61.80 & 75.19  \cr
En$_2$ResNet110 & CIFAR10  & ResNet110         & 66.26  & 62.89 & 77.71  \cr
\bottomrule[1.0pt]
\end{tabular}
\end{threeparttable}
\end{table}

Fourth, we perform experiments on the CIFAR100 to further verify the efficiency of EnResNets in defending against adversarial attacks. Table~\ref{ERM-Cifar100} lists the naturally accuracies of the naturally trained ResNets and their ensembles, again, the ensemble can improve natural accuracies. Table~\ref{EARM-Cifar100} lists natural and robust accuracies of robustly trained ResNet20, ResNet44, and their ensembles under white-box attacks. The robust accuracy under the blind attacks is listed in Table~\ref{EARM-Cifar100-BlindAttack}. The natural accuracy of the PGD adversarially trained baseline ResNet20 is $46.02$\%, and it has robust accuracies $24.77$\%, $23.23$\%, and $32.42$\% under FGSM, IFGSM$^{20}$, and C\&W attacks, respectively. En$_5$ResNet20 increases them to $51.72$\%, $31.64$\%, $27.80$\%, and $40.44$\%, respectively. The ensemble of ResNets is more effective in defending against adversarial attacks than making the ResNets deeper. For instance, En$_2$ResNet20 that has $\sim 0.27M \times 2$ parameters is much more robust to adversarial attacks, FGSM ($30.20$\% v.s. $28.40$\%), IFGSM$^{20}$ ($26.25$\% v.s. $25.81$\%), and C\&W ($40.06$\% v.s. $36.06$\%), than ResNet44 with $\sim 0.66M$ parameters. Under blind attacks, En$_2$ResNet20 is also significantly more robust to different attacks where the opponent model is used to generate adversarial images. Under the same model and computation complexity, EnResNets is more robust to adversarial images and more accurate on clean images than deeper nets.
\medskip

\begin{table}[tp]
\centering
\fontsize{8.5}{8.5}\selectfont
\begin{threeparttable}
\caption{Natural accuracies of naturally trained ResNet20 and different ensemble of noise injected ResNet20 on the CIFAR100 dataset. Unit: \%.}\label{ERM-Cifar100}
\begin{tabular}{ccc}
\toprule[1.0pt]
\ \ \ \ \ \ \ \ \ \ \ \ \ \ \ \ \ \ \  Model\ \ \ \ \ \ \ \ \ \ \ \ \ \ \ \ \ \ \  &\ \ \ \ \ \ \ \ \ \ \ \ \ \ \ \ \ \ \   dataset\ \ \ \ \ \ \ \ \ \ \ \ \ \ \   &\ \ \ \ \ \ \ \ \ \ \ \ \ \ \   $\mathcal{A}_{\rm nat}$\ \ \ \ \ \ \ \ \ \ \ \ \ \ \ \    \cr
\midrule[0.8pt]
ResNet20        & CIFAR100  & 68.53\cr
ResNet44        & CIFAR100  & 71.48\cr
En$_2$ResNet20  & CIFAR100  & 69.57\cr
En$_5$ResNet20  & CIFAR100  & 70.22\cr
\bottomrule[1.0pt]
\end{tabular}
\end{threeparttable}
\end{table}

\begin{table}[!h]
\centering
\fontsize{8.5}{8.5}\selectfont
\begin{threeparttable}
\caption{Natural and robust accuracies of robustly trained ResNet20 and different ensemble of noise injected ResNet20 on the CIFAR100. Unit: \%.
}\label{EARM-Cifar100}
\begin{tabular}{cccccc}
\toprule[1.0pt]
\ \ \ \ Model\ \ \ \  &\ \ \ \  dataset\ \ \ \  &\ \ \ \  $\mathcal{A}_{\rm nat}$\ \ \ \  &\ \ \ \ $\mathcal{A}_{\rm rob}$ (FGSM)\ \ \ \  &\ \ \ \  $\mathcal{A}_{\rm rob}$ (IFGSM$^{20}$)\ \ \ \  &\ \ \ \ $\mathcal{A}_{\rm rob}$ (C\&W)\ \ \ \ \cr
\midrule[0.8pt]
ResNet20        & CIFAR100  & 46.02   & 24.77  & 23.23 & 32.42  \cr
En$_2$ResNet20  & CIFAR100  & 50.68   & 30.20  & 26.25 & 40.06  \cr
En$_5$ResNet20  & CIFAR100  & {\bf 51.72}   & {\bf 31.64}  & {\bf 27.80} & {\bf 40.44}  \cr
ResNet44        & CIFAR100  & 50.38   & 28.40  & 25.81 & 36.06  \cr
\bottomrule[1.0pt]
\end{tabular}
\end{threeparttable}
\end{table}

\begin{table}[!h]
\centering
\fontsize{8.5}{8.5}\selectfont
\begin{threeparttable}
\caption{Accuracies of robustly trained models on the adversarial images of CIFAR100 crafted by attacking the oracle model with different attacks. Unit: \%.}\label{EARM-Cifar100-BlindAttack}
\begin{tabular}{cccccc}
\toprule[1.0pt]
\ \ \ Model\ \ \  &\ \ \  dataset\ \ \  &\ \ \  Oracle \ \ \  &\ \ \  $\mathcal{A}_{\rm rob}$ (FGSM)\ \ \  & \ \ \ $\mathcal{A}_{\rm rob}$ (IFGSM$^{20}$)\ \ \  &\ \ \  $\mathcal{A}_{\rm rob}$ (C\&W)\ \ \ \cr
\midrule[0.8pt]
ResNet20        & CIFAR100   & En$_2$ResNet20    & 33.08  & 30.79 & 41.52  \cr
En$_2$ResNet20  & CIFAR100   & ResNet20          & 34.15  & 33.34 & 48.21  \cr
\bottomrule[1.0pt]
\end{tabular}
\end{threeparttable}
\end{table}

Figure~\ref{fig-Attack} depicts a few selected images from the CIFAR10 and their adversarial ones crafted by applying either IFGSM$^{20}$ or C\&W attack to attack both ResNet20 and En$_5$ResNet20. Both adversarially trained ResNet20 and En$_5$ResNet20 fail to correctly classify any of the adversarial versions of these four images. For the deer image, it might also be difficult for human to distinguish it from a horse.
\begin{figure}[!h]
\centering
\begin{tabular}{c}
\includegraphics[width=0.50\columnwidth]{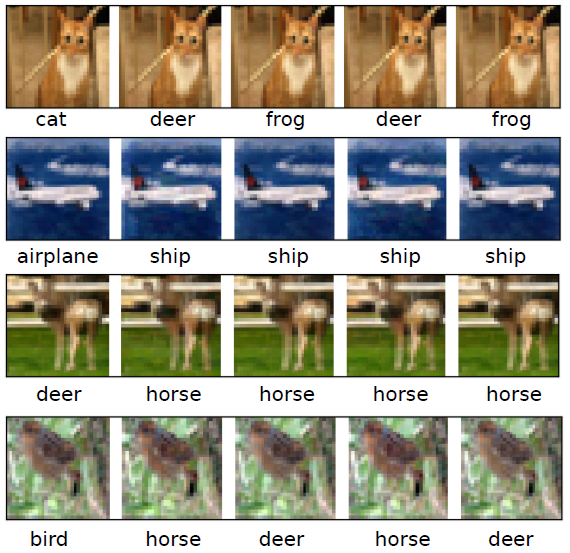}
\end{tabular}
\caption{Column 1: original images and labels; column 2-3 (4-5): adversarial images crafted by using IFGSM$^{20}$ and C\&W to attack ResNet20 (En$_5$ResNet20) and corresponding predicted labels.}
\label{fig-Attack}
\end{figure}

\subsection{Integration of Separately Trained EnResNets}
In the previous subsection, we verified the adversarial defense capability of EnResNet, which is an approximation to the Feynman-Kac formula to solve the convection-diffusion equation. As we showed, when more ResNets and larger models are involved in the ensemble, both natural and robust accuracies are improved. However, EnResNet proposed above requires to train the ensemble jointly, which poses memory challenges for training ultra-large ensembles. To overcome this issue, we consider training each component of the ensemble individually and integrating them together for prediction. The major benefit of this strategy is that with the same amount of GPU memory, we can train a much larger model for inference since the batch size used in inference can be one.
\medskip

Table~\ref{EARM-Cifar10-Separate-Trained} lists natural and robust accuracies of the integration of separately trained EnResNets on the CIFAR10. The integration of separately trained EnResNets 
have better robust accuracy than each component. For instance, the integration of En$_2$ResNet110 and En$_1$WideResNet34-10 gives a robust accuracy ${\bf 57.94}$\% under the IFGSM$^{20}$ attack, which is remarkably better than both En$_2$ResNet110 ($53.05$\%) and En$_1$WideResNet34-10 ($56.60$\%). To the best of our knowledge, $57.94$\% outperforms the current state-of-the-art \cite{Zhang:2019-Trades} by $1.33$\%. The effectiveness of the integration of separately trained EnResNets sheds light on the development of ultra-large models to improve efficiency for adversarial defense.

\begin{table}[!h]
\centering
\fontsize{8.5}{8.5}\selectfont
\begin{threeparttable}
\caption{Natural and robust accuracies of different integration of different robustly trained EnResNets on the CIFAR10. Unit: \%.
}\label{EARM-Cifar10-Separate-Trained}
\begin{tabular}{cccccc}
\toprule[1.0pt]
Model &dataset  & $\mathcal{A}_{\rm nat}$  & $\mathcal{A}_{\rm rob}$ (FGSM)  & $\mathcal{A}_{\rm rob}$ (IFGSM$^{20}$) & $\mathcal{A}_{\rm rob}$ (C\&W)\cr
\midrule[0.8pt]
En$_2$ResNet20\&En$_5$ResNet20  & CIFAR10  &  82.82  & 59.14  & 53.15 & 68.00  \cr
En$_2$ResNet44\&En$_5$ResNet20  & CIFAR10  &  82.99  & 59.64  & 53.86 & 69.36  \cr
En$_2$ResNet110\&En$_5$ResNet20 & CIFAR10  &  83.57  & 60.63  & 54.87 & 70.02  \cr
En$_2$ResNet110\&En$_1$WideResNet34-10 & CIFAR10  &  {\bf 85.62}  &  {\bf 62.48} & {\bf 57.94} & {\bf 70.20} \cr
\bottomrule[1.0pt]
\end{tabular}
\end{threeparttable}
\end{table}

\subsection{Comparison with the Wide ResNet}
In this subsection, we show that with the same number of parameters, EnResNets is more adversarially robust that the Wide ResNets. We compare EnResNet$_2$20 with the wide-ResNet: WRN-14-2 \cite{Zagoruyko2016WRN}. WRN-14-2 has $\sim0.69$M parameters which is more than that of EnResNet$_2$20. We list natural and robust accuracies of the robustly trained models on the CIFAR10 benchmark in Table.~\ref{EARM-Cifar10-WideResNet}. 
En$_2$ResNet20 has higher natural accuracy than WRN-14-2 ($80.34$\% v.s. $78.37$\%). Moreover, En$_2$ResNet20 is more robust to both IFGSM$^{20}$ and C\&W attacks.

\begin{table}[tp]
\centering
\fontsize{8.5}{8.5}\selectfont
\begin{threeparttable}
\caption{Natural and robust accuracies of robustly trained En$2$esNet20 and WRN-14-2 on the CIFAR10 dataset. Unit: \%.}\label{EARM-Cifar10-WideResNet}
\begin{tabular}{cccccc}
\toprule[1.0pt]
Model & dataset & $\mathcal{A}_{\rm nat}$  & $\mathcal{A}_{\rm rob}$ (FGSM) & $\mathcal{A}_{\rm rob}$ (IFGSM$^{20}$) & $\mathcal{A}_{\rm rob}$ (C\&W)\cr
\midrule[0.8pt]
En$_2$ResNet20  & CIFAR10  &  80.34 & 57.23 & 50.06 & 66.47  \cr
WRN-14-2        & CIFAR10  &  78.37 & 52.93  & 48.85 & 60.30  \cr
\bottomrule[1.0pt]
\end{tabular}
\end{threeparttable}
\end{table}

\subsection{Gradient Mask and Comparison with Simple Ensembles}
Besides applying EOT gradient, we further verify that our defense is not due to obfuscated gradient. We use IFGSM$^{20}$ to attack naturally trained (using the same approach as that used in \cite{ResNet}) En$_1$ResNet20, En$_2$ResNet20, and En$_5$ResNet20, and the corresponding accuracies are: $0$\%, $0.02$\%, and $0.03$\%, respectively. All naturally trained EnResNets are easily fooled by IFGSM$^{20}$, thus gradient mask does not play an important role in EnResNets for adversarial defense \cite{Athalye:2018}.
\medskip

Ensemble of models for adversarial defense has been studied in \cite{Strauss:2017}. Here, we show that ensembles of robustly trained ResNets without noise injection cannot boost natural and robust accuracy much. The natural accuracy of jointly (separately) adversarially trained ensemble of two ResNet20 without noise injection is $75.75$\% ($74.96$\%), which does not substantially outperform ResNet20 with a natural accuracy $75.11$\%. The corresponding robust accuracies are $51.11$\% ($51.68$\%), $47.28$\% ($47.86$\%), and $59.73$\% ($59.80$\%), respectively, under the FGSM, IFGSM$^{20}$, and C\&W attacks. These robust accuracies are much inferior to that of En$_2$ResNet20. Furthermore, the ensemble of separately trained robust ResNet20 and robust ResNet44 gives a natural accuracy of $77.92$\%, and robust accuracies are $54.73$\%, $51.47$\%, $61.77$\% under the above three attacks. These results reveal that ensemble adversarially trained ResNets via the Feynman-Kac formalism is much more accurate than standard ensemble in both natural and robust generalizations.

\section{Ensemble of Different ResNets}\label{Section-Generalization}
In previous sections, we proposed and numerically verifies the efficiency of the EnResNet, which can be regarded as an Monte Carlo (MC) approximation to the Feynman-Kac formula that used to solve the convection-diffusion equation. A straightforward extension is to solve the convection-diffusion equation by the multi-level MC \cite{Giles:2018}, which in turn can be simulated by an ensemble of ResNets with different depths. In previous ensembles, we used the same weight for each individual ResNet. However, in the ensemble of different ResNets, we learn the optimal weight for each component. Here, we derive the formula to learn the optimal weights in the cross-entropy loss setting.
\medskip

Suppose we have an ensemble of two ResNets for $n$-class classification with training data $\{\mathbf{x}_i, y_i\}_{i=1}^N$ where $y_i$ is the label of $\mathbf{x}_i$ and $N$ is the number of training data. Let the tensors before the softmax output activation of two ResNet, respectively, be
$$
\tilde{\mathbf{y}}_i = \left(\tilde{y}_i^1, \tilde{y}_i^2 \dots, \tilde{y}_i^n\right),
$$
and 
$$
\hat{\mathbf{y}}_i = \left(\hat{y}_i^1, \hat{y}_i^2, \dots, \hat{y}_i^n\right),
$$
where $i=1, 2, \dots, N$.
\medskip

\noindent The ensemble of these two ResNets gives the following output before the softmax output activation for the $i$-th instance
$$
\mathbf{y}_i = w_1 \tilde{\mathbf{y}}_i + w_2 \hat{\mathbf{y}}_i = \left(\begin{array}{c}
w_1\tilde{y}_i^1 + w_2\hat{y}_i^1\\
w_1\tilde{y}_i^2 + w_2\hat{y}_i^2\\
\vdots\\
w_1\tilde{y}_i^n + w_2\hat{y}_i^n\\
\end{array}\right).
$$
where $w_1$ and $w_2$ are the weights of the two ResNets, where we enforce $w_1+w_2=1$.
Hence, the corresponding log-softmax for the $i$-th instance is
$$
\left(\begin{array}{c}
\log\left( \frac{\exp{(w_1\tilde{y}_i^1 + w_2\hat{y}_i^1)}}{\sum_{j=1}^n \exp{(w_1\tilde{y}_i^j + w_2\hat{y}_i^j)} } \right)\\
\log\left( \frac{\exp{(w_1\tilde{y}_i^2 + w_2\hat{y}_i^2)}}{\sum_{j=1}^n \exp{(w_1\tilde{y}_i^j + w_2\hat{y}_i^j)} } \right)\\
\vdots\\
\log\left( \frac{\exp{(w_1\tilde{y}_i^n + w_2\hat{y}_i^n)}}{\sum_{j=1}^n \exp{(w_1\tilde{y}_i^j + w_2\hat{y}_i^j)} } \right)\\
\end{array}\right),
$$
Let $L$ be the total cross-entropy loss on these $N$ training data, then we have
\begin{equation}\label{w1-update}
\frac{\partial L}{\partial w_1} = - \sum_{i=1}^N \left(\tilde{y}_i^{t_i} - \frac{\sum_{j=1}^n\tilde{y}_i^j \exp{(w_1\tilde{y}_i^j + w_2\hat{y}_i^j)}}{\sum_{j=1}^n \exp{(w_1\tilde{y}_i^j + w_2\hat{y}_i^j)} }\right),
\end{equation}
and
\begin{equation}\label{w2-update}
\frac{\partial L}{\partial w_2} = -\sum_{i=1}^N \left(\hat{y}_i^{t_i} - \frac{\sum_{j=1}^n\hat{y}_i^j \exp{(w_1\tilde{y}_i^j + w_2\hat{y}_i^j)}}{\sum_{j=1}^n \exp{(w_1\tilde{y}_i^j + w_2\hat{y}_i^j)} }\right).
\end{equation}
In implementation, we update these weights once per epoch during the training and normalize the updated weights.
\medskip

To show performance of ensembles of jointly trained different ResNets, we robustly train an ensemble of noise injected ResNet20 and ResNet32 on both CIFAR10 and CIFAR100 benchmarks. As shown in Tables~\ref{EARM-DifferentNets-Cifar10} and \ref{EARM-DifferentNets-Cifar100}, on CIFAR10 the ensemble of jointly trained noise injected ResNet20 and ResNet32 outperforms En$_2$ResNet32 in classifying both clean ($81.46$\% v.s. $81.56$\%) and adversarial images of C\&W attack ($68.41$\% v.s. $68.62$\%).  On CIFAR100, performances of the ensemble of jointly trained noise injected ResNet20 and ResNet32 and En$_2$ResNet32 are comparable.

\begin{table}[tp]
\centering
\fontsize{8.5}{8.5}\selectfont
\begin{threeparttable}
\caption{Natural and robust accuracies of the robustly trained En$_2$ResNet32 and En$_1$ResNet20\&En$_1$ResNet32 on the CIFAR10 dataset. Unit: \%.}\label{EARM-DifferentNets-Cifar10}
\begin{tabular}{ccccc}
\toprule[1.0pt]
Model & dataset & $\mathcal{A}_{\rm nat}$ & $\mathcal{A}_{\rm rob}$ (IFGSM$^{20}$) & $\mathcal{A}_{\rm rob}$ (C\&W)\cr
\midrule[0.8pt]
En$_2$ResNet32                  & CIFAR10  & 81.46    & 52.06 & 68.41  \cr
En$_1$ResNet20\&En$_1$ResNet32  & CIFAR10  & 81.56    & 51.99 & 68.62  \cr
\bottomrule[1.0pt]
\end{tabular}
\end{threeparttable}
\end{table}

\begin{table}[tp]
\centering
\fontsize{8.5}{8.5}\selectfont
\begin{threeparttable}
\caption{Natural and robust accuracies of the robustly trained En$_2$ResNet32 and En$_1$ResNet20\&En$_1$ResNet32 on the CIFAR100 dataset. Unit: \%.}\label{EARM-DifferentNets-Cifar100}
\begin{tabular}{ccccc}
\toprule[1.0pt]
Model & dataset & $\mathcal{A}_{\rm nat}$  & $\mathcal{A}_{\rm rob}$ (IFGSM$^{20}$) & $\mathcal{A}_{\rm rob}$ (C\&W)\cr
\midrule[0.8pt]
En$_2$ResNet32                  & CIFAR100  & 53.14    & 27.27 & 41.50  \cr
En$_1$ResNet20\&En$_1$ResNet32  & CIFAR100  & 53.07    & 27.01 & 42.23  \cr
\bottomrule[1.0pt]
\end{tabular}
\end{threeparttable}
\end{table}

\section{Concluding Remarks}\label{Section-Conclusion}
Motivated by the transport equation modeling of the ResNet and the Feynman-Kac formula, we proposed a novel ensemble algorithm for ResNets. The proposed ensemble algorithm consists of two components: injecting Gaussian noise to each residual mapping of ResNet, and averaging over multiple jointly and robustly trained baseline ResNets. Numerical results on the CIFAR10 and CIFAR100 show that our ensemble algorithm improves both natural and robust generalization of the robustly trained models. Our approach is a complement to many existing adversarial defenses, e.g., regularization based approaches for adversarial training \cite{Zhang:2019-Trades}. It is of interesting to explore the regularization effects in EnResNet.
\medskip

The memory consumption is one of the major bottlenecks in training ultra-large DNNs. Another advantage of our framework is that we can train small models and integrate them during testing. 

\section*{Acknowledgments}
This material is based on research sponsored by the Air Force Research Laboratory under grant numbers FA9550-18-0167, DARPA FA8750-18-2-0066, and MURI FA9550-18-1-0502, the Office of Naval Research under grant number N00014-18-1-2527, the U.S. Department of Energy under grant number DOE SC0013838, the National Science Foundation under grant number DMS-1554564, (STROBE), and by the Simons foundation. Zuoqiang Shi is supported by NSFC 11671005. Bao Wang thanks Farzin Barekat, Hangjie Ji, Jiajun Tong, and Yuming Zhang for stimulating discussions.


\begin{thebibliography}{10}

\bibitem{Attack:Tesla}
Adversarial machine learning against {Tesla}'s autopilot.
\newblock
  \url{https://www.schneier.com/blog/archives/2019/04/adversarial_mac.html}.

\bibitem{Kloeden:1992}
P.~Kloeden abd E.~Platen.
\newblock {\em Numerical Solution of Stochastic Differential Equations}.
\newblock Springer, 1992.

\bibitem{Akhtar:2018}
N.~Akhtar and A.~Mian.
\newblock Threat of adversarial attacks on deep learning in computer vision: A
  survey.
\newblock {\em arXiv preprint arXiv:1801.00553}, 2018.

\bibitem{Akhtar_2018_CVPR}
Naveed Akhtar, Jian Liu, and Ajmal Mian.
\newblock Defense against universal adversarial perturbations.
\newblock In {\em The IEEE Conference on Computer Vision and Pattern
  Recognition (CVPR)}, June 2018.

\bibitem{Athalye:2018}
A.~Athalye, N.~Carlini, and D.~Wagner.
\newblock Obfuscated gradients give a false sense of security: Circumventing
  defenses to adversarial examples.
\newblock {\em International Conference on Machine Learning}, 2018.

\bibitem{Athalye:2018B}
A.~Athalye, L.~Engstrom, A.~Ilyas, and K.~Kwok.
\newblock Synthesizing robust adversarial examples.
\newblock {\em International Conference on Machine Learning}, 2018.

\bibitem{Brendel:2017}
W.~Brendel, J.~Rauber, and M.~Bethge.
\newblock Decision-based adversarial attacks: Reliable attacks against
  black-box machine learning models.
\newblock {\em arXiv preprint arXiv:1712.04248}, 2017.

\bibitem{Cao:2017}
X.~Cao and N.~Gong.
\newblock Mitigating evasion attacks to deep neural networks via region-based
  classification.
\newblock In {\em {33rd Annual Computer Security Applications Conference}},
  2017.

\bibitem{CWAttack:2016}
N.~Carlini and D.A. Wagner.
\newblock Towards evaluating the robustness of neural networks.
\newblock {\em IEEE European Symposium on Security and Privacy}, pages 39--57,
  2016.

\bibitem{NODE}
R.~Chen, Y.~Rubanova, J.~Bettencourt, and D.~Duvenaud.
\newblock Neural ordinary differential equations.
\newblock In {\em Advances in Neural Information Processing Systems}, 2018.

\bibitem{Chen:2017}
X.~Chen, C.~Liu, B.~Li, K.~Liu, and D.~Song.
\newblock Targeted backdoor attacks on deep learning systems using data
  poisoning.
\newblock {\em arXiv preprint arXiv:1712.05526}, 2017.

\bibitem{CRK19}
J.~Cohen, E.~Rosenfeld, and J.Z. Kolter.
\newblock Certified adversarial robustness via randomized smoothing.
\newblock {\em arXiv preprint arXiv:1902.02918v1}, 2019.

\bibitem{Dou:2018}
Z.~Dou, S.~J. Osher, and B.~Wang.
\newblock Mathematical analysis of adversarial attacks.
\newblock {\em arXiv preprint arXiv:1811.06492}, 2018.

\bibitem{E:2017}
W.~E.
\newblock A proposal on machine learning via dynamical systems.
\newblock {\em Communications in Mathematics and Statistics}, 5:1--11, 2017.

\bibitem{Giles:2018}
M.~Giles.
\newblock Multilevel monte carlo methods.
\newblock {\em Acta Numerica}, pages 1--70, 2018.

\bibitem{Goodfellow:2014AdversarialTraining}
I.~J. Goodfellow, J.~Shlens, and C.~Szegedy.
\newblock Explaining and harnessing adversarial examples.
\newblock {\em arXiv preprint arXiv:1412.6275}, 2014.

\bibitem{PapernotMalware:2016}
K.~Grosse, N.~Papernot, P.~Manoharan, M.~Backes, and P.~McDaniel.
\newblock Adversarial perturbations against deep neural networks for malware
  classification.
\newblock {\em arXiv preprint arXiv:1606.04435}, 2016.

\bibitem{Giusti:2016Drones}
A.~Guisti, J.~Guzzi, D.C. Ciresan, F.L. He, J.P. Rodriguez, F.~Fontana,
  M.~Faessler, C.~Forster, J.~Schmidhuber, G.~Di Carlo, and et~al.
\newblock A machine learning approach to visual perception of forecast trails
  for mobile robots.
\newblock {\em IEEE Robotics and Automation Letters}, pages 661--667, 2016.

\bibitem{Haber:2017}
E.~Haber and L.~Ruthotto.
\newblock Stable architectures for deep neural networks.
\newblock {\em Inverse Problems}, 34:014004, 2017.

\bibitem{Hansen:1990}
K.~L. Hansen and P.~Salamon.
\newblock Neural network ensembles.
\newblock {\em IEEE Transactions on Pattern Analysis and Machine Intelligence
  archive}, pages 993--1001, 1990.

\bibitem{ResNet}
K.~He, X.~Zhang, S.~Ren, and J.~Sun.
\newblock Deep residual learning for image recognition.
\newblock In {\em CVPR}, pages 770--778, 2016.

\bibitem{PreActResNet}
K.~He, X.~Zhang, S.~Ren, and J.~Sun.
\newblock Identity mappings in deep residual networks.
\newblock In {\em ECCV}, 2016.

\bibitem{DenseNet}
G.~Huang, Z.~Liu, L.~{van der Maaten}, and K.~Weinberger.
\newblock Densely connected convolutional networks.
\newblock In {\em CVPR}, 2017.

\bibitem{Ilyas:2018}
A.~Ilyas, L.~Engstrom, A.~Athalye, and J.~Lin.
\newblock Black-box adversarial attacks with limited queries and information.
\newblock {\em International Conference on Machine Learning}, 2018.

\bibitem{Ju:2017}
C.~Ju, A.~Bibaut, and M.~J. {van der Laan}.
\newblock The relative performance of ensemble methods with deep convolutional
  neural networks or image classification.
\newblock {\em arXiv preprint arXiv:1607.02533}, 2016.

\bibitem{FCFormula:1949}
M.~Kac.
\newblock On distributions of certain {Wiener} functionals.
\newblock {\em Transactions of the American Mathematical Society}, 65:1--13,
  1949.

\bibitem{Kingma:2014Adam}
D.~Kingma and J.~Ba.
\newblock Adam: A method for stochastic optimization.
\newblock {\em arXiv preprint arXiv:1412.6980}, 2014.

\bibitem{Alex:2009-Cifar}
A.~Krizhevsky.
\newblock Learning multiple layers of features from tiny images.
\newblock 2009.

\bibitem{AlexNet:2012}
A.~Krizhevsky, I.~Sutskever, and G.~E. Hinton.
\newblock Imagenet classification with deep convolutional neural networks.
\newblock In {\em Advances in neural information processing systems}, 2012.

\bibitem{Kurakin:2017}
A.~Kurakin, I.~Goodfellow, and S.~Bengio.
\newblock Adversarial machine learning at scale.
\newblock In {\em International Conference on Learning Representations}, 2017.

\bibitem{Ladyzhenskaja:1968}
O.~Ladyzhenskaja, V.~Solonnikov, and N.~Uraltseva.
\newblock {\em Linear and Quasilinear Equations of Parabolic Type}.
\newblock American Mathematical Society, Providence, R.I., 1968.

\bibitem{DeepLearningReview:2015}
Y.~Le{C}un, Y.~Bengio, and G.~Hinton.
\newblock Deep learning.
\newblock {\em Nature}, 521:436--444, 2015.

\bibitem{Lecuyer:2019}
M.~Lecuyer, V.~Atlidakis, R.~Geambasu, D.~Hsu, and S.~Jana.
\newblock Certified robustness to adversarial examples with differential
  privacy.
\newblock In {\em {IEEE Symposium on Security and Privacy (SP)}}, 2019.

\bibitem{Li:2018Smooth}
B.~Li, C.~Chen, W.~Wang, and L.~Carin.
\newblock Second-order adversarial attack and certifiable robustness.
\newblock {\em arXiv preprint arXiv:1809.03113}, 2018.

\bibitem{Li:2017PDE}
Z.~Li and Z.~Shi.
\newblock Deep residual learning and pdes on manifold.
\newblock {\em arXiv preprint arXiv:1708:05115}, 2017.

\bibitem{LiuYanpei:2016}
Y.~Liu, X.~Chen, C.~Liu, and D.~Song.
\newblock Delving into transferable adversarial examples and black-box attacks.
\newblock {\em arXiv preprint arXiv:1611.02770}, 2016.

\bibitem{BinDong:2018}
Y.~Lu, A.~Zhong, Q.~Li, and Bin Dong.
\newblock Beyond finite layer neural networks: Bridging deep architectures and
  numerical differential equations.
\newblock {\em International Conference on Machine Learning}, 2018.

\bibitem{Madry:2018}
A.~Madry, A.~Makelov, L.~Schmidt, D.~Tsipras, and A.~Vladu.
\newblock Towards deep learning models resistant to adversarial attacks.
\newblock In {\em International Conference on Learning Representations}, 2018.

\bibitem{Moosavi-Dezfooli_2017_CVPR}
Seyed-Mohsen Moosavi-Dezfooli, Alhussein Fawzi, Omar Fawzi, and Pascal
  Frossard.
\newblock Universal adversarial perturbations.
\newblock In {\em The IEEE Conference on Computer Vision and Pattern
  Recognition (CVPR)}, July 2017.

\bibitem{Na:2018}
Taesik Na, Jong~Hwan Ko, and Saibal Mukhopadhyay.
\newblock Cascade adversarial machine learning regularized with a unified
  embedding.
\newblock In {\em International Conference on Learning Representations}, 2018.

\bibitem{PapernotAttack:2016}
N.~Papernot, P.~McDaniel, S.~Jha, M.~Fredrikson, Z.B. Celik, and A.~Swami.
\newblock The limitations of deep learning in adversarial settings.
\newblock {\em IEEE European Symposium on Security and Privacy}, pages
  372--387, 2016.

\bibitem{PapernotSecurity:2016}
N.~Papernot, P.~McDaniel, A.~Sinha, and M.~Wellman.
\newblock Sok: Towards the science of security and privacy in machien learning.
\newblock {\em arXiv preprint arXiv:1611.03814}, 2016.

\bibitem{DBLP:journals/corr/PapernotMG16}
Nicolas Papernot, Patrick~D. McDaniel, and Ian~J. Goodfellow.
\newblock Transferability in machine learning: from phenomena to black-box
  attacks using adversarial samples.
\newblock {\em CoRR}, abs/1605.07277, 2016.

\bibitem{Raghunathan:2018a}
A.~Raghunathan, J.~Steinhardt, and P.~Liang.
\newblock Certified defenses against adversarial examples.
\newblock In {\em International Conference on Learning Representations}, 2018.

\bibitem{Raghunathan:2018b}
A.~Raghunathan, J.~Steinhardt, and P.~Liang.
\newblock Semidefinite relaxations for certifying robustness to adversarial
  examples.
\newblock In {\em Advances in Neural Information Processing Systems}, 2018.

\bibitem{Ross:2017}
A.~Ross and F.~{Doshi-Velez}.
\newblock Improving the adversarial robustness and interpretability of deep
  neural networks by regularizing their input gradients.
\newblock {\em arXiv preprint arXiv:1711.09404}, 2017.

\bibitem{Salman:2019}
H.~Salman, G.~Yang, H.~Zhang, C.~Hsieh, and P.~Zhang.
\newblock A convex relaxation barrier to tight robustness verification of
  neural networks.
\newblock {\em arXiv preprint arXiv:1902.08722}, 2019.

\bibitem{VGG:2014}
K.~Simonyan and A.~Zisserman.
\newblock Very deep convolutional networks for large-scale image recognition.
\newblock {\em arXiv preprint arXiv:1409.1556}, 2014.

\bibitem{Strauss:2017}
T.~Strauss, M.~Hanselmann, A.~Junginger, and H.~Ulmer.
\newblock Ensemble methods as a defense to adversarial perturbations against
  deep neural networks.
\newblock {\em arXiv preprint arXiv:1709.0342}, 2017.

\bibitem{SunDu:2018}
Q.~Sun, Y.~Tao, and Q.~Du.
\newblock Stochastic training of residual networks: a differential equation
  viewpoint.
\newblock {\em arXiv preprint arXiv:1812.00174}, 2018.

\bibitem{szegedy2013intriguing}
C.~Szegedy, W.~Zaremba, I.~Sutskever, J.~Bruna, D.~Erhan, I.~Goodfellow, and
  R.~Fergus.
\newblock Intriguing properties of neural networks.
\newblock {\em arXiv preprint arXiv:1312.6199}, 2013.

\bibitem{Wang:2018AdversarialDefense}
B.~Wang, A.~T. Lin, Z.~Shi, W.~Zhu, P.~Yin, A.~L. Bertozzi, and S.~J. Osher.
\newblock Adversarial defense via data dependent activation function and total
  variation minimization.
\newblock {\em arXiv preprint arXiv:1809.08516}, 2018.

\bibitem{BaoWang:2018NIPS}
B.~Wang, X.~Luo, Z.~Li, W.~Zhu, Z.~Shi, and S.~Osher.
\newblock Deep neural nets with interpolating function as output activation.
\newblock {\em Advances in Neural Information Processing Systems}, 2018.

\bibitem{Wong:2018ICML}
E.~Wong and J.~Kolter.
\newblock Provable defenses against adversarial examples via the convex outer
  adversarial polytope.
\newblock In {\em International Conference on Machine Learning}, 2018.

\bibitem{Wong:2018b}
E.~Wong, F.~Schmidt, J.~Metzen, and J.~Kolter.
\newblock Scaling provable adversarial defenses.
\newblock In {\em Advances in Neural Information Processing Systems}, 2018.

\bibitem{ResNext}
S.~Xie, R.~Girshick, P.~Dollar, Z.~Tu, and K.~He.
\newblock Aggregated residual transformations for deep neural networks.
\newblock In {\em CVPR}, 2017.

\bibitem{Yin:2019-Rademacher}
D.~Yin, K.~Ramchandran, and P.~Bartlett.
\newblock Rademacher complexity for adversarially robust generalization.
\newblock {\em arXiv preprint arXiv:1810.11914}, 2018.

\bibitem{Zagoruyko2016WRN}
Sergey Zagoruyko and Nikos Komodakis.
\newblock Wide residual networks.
\newblock In {\em BMVC}, 2016.

\bibitem{Zhang:2019-Trades}
H.~Zhang, Y.~Yu, J.~Jiao, E.~Xing, L.~Ghaoui, and M.~Jordan.
\newblock Theoretically principled trade-off between robustness and accuracy.
\newblock {\em arXiv preprint arXiv:1901.08573}, 2019.

\bibitem{Zheng:2017}
S.~Zheng, Y.~Song, T.~Leung, and I.~Goodfellow.
\newblock Improving the robustness of deep neural networks via stability
  training.
\newblock In {\em IEEE Conference on Computer Vision and Pattern Recognition},
  2016.

\bibitem{Zhu:2018}
M.~Zhu, B.~Chang, and C.~Fu.
\newblock Convolutional neural networks combined with runge-kutta methods.
\newblock {\em arXiv preprint arXiv:1802.08831}, 2018.

\end{thebibliography}

\end{document}